\definecolor{bluecite}{HTML}{0071BC}
\newtheorem{theorem}{Theorem}[section]
\newtheorem{lemma}{Lemma}[section]
\newtheorem{proof}{Proof}[section]
\newtheorem{assumption}{Assumption}
\newtheorem{corollary}{Corollary}
\crefname{section}{Sec.}{Secs.}
\Crefname{section}{Section}{Sections}
\Crefname{table}{Table}{Tables}
\crefname{table}{Tab.}{Tabs.}
\definecolor{darkgreen}{rgb}{0.0, 0.5, 0.0} 
\newcommand{\mllm}{$\mathcal{M}_\theta^{\text{MLLM}}$}
\newcommand{\comt}[1]{#1}
\renewcommand{\comt}[1]{}
\definecolor{myblue}{RGB}{235,235,250}
\definecolor{lightpink}{RGB}{204, 231, 207} 
\definecolor{lightblue}{RGB}{210, 220, 250} 
\definecolor{lightgray}{RGB}{237, 237, 237} 
\definecolor{superlightred}{rgb}{0.99, 0.92, 0.92}
\definecolor{darkgreen}{RGB}{50,100,0}
\definecolor{darkred}{RGB}{200, 0, 0}
\title{Don't Just Chase “Highlighted Tokens” in MLLMs: \\ Revisiting Visual Holistic Context Retention}
\author{%
  Xin Zou$^{1,2}$, Di Lu$^{1,\dagger}$, Yizhou Wang$^{1}$, Yibo Yan$^{1,2}$, Yuanhuiyi Lyu$^{1,2}$, \\\textbf{Xu Zheng$^{1,3}$, Linfeng Zhang$^{4}$, Xuming Hu$^{1,2}$\thanks{Corresponding author, $\dagger$Equal contribution}}  \\ [2.5pt]
  $^{1}$ The Hong Kong University of Science and Technology (Guangzhou)\\
  $^{2}$ The Hong Kong University of Science and Technology\\
  $^{3}$ INSAIT, Sofia University “St. Kliment Ohridski”\\
  $^{4}$ Shanghai Jiao Tong University \\[2.5pt]
 {~\texttt{\url{https://github.com/obananas/HoloV}}}\\
}
\begin{document}

\maketitle

\etocdepthtag.toc{mtchapter}
\etocsettagdepth{mtchapter}{subsection}
\etocsettagdepth{mtappendix}{none}

\begin{abstract}
Despite their powerful capabilities, Multimodal Large Language Models (MLLMs) suffer from considerable computational overhead due to their reliance on massive visual tokens. Recent studies have explored token pruning to alleviate this problem, which typically uses text-vision cross-attention or [\texttt{CLS}] attention to assess and discard redundant visual tokens. 
In this work, we identify a critical limitation of such attention-first pruning approaches, \textit{i.e.}, they tend to preserve semantically similar tokens, resulting in pronounced performance drops under high pruning ratios.
To this end, we propose {HoloV}, a simple yet effective, plug-and-play visual token pruning framework for efficient inference. Distinct from previous attention-first schemes, HoloV rethinks token retention from a holistic perspective. By adaptively distributing the pruning budget across different spatial crops, HoloV ensures that the retained tokens capture the global visual context rather than isolated salient features. This strategy minimizes representational collapse and maintains task-relevant information even under aggressive pruning. 
Experimental results demonstrate that our HoloV achieves superior performance across various tasks, MLLM architectures, and pruning ratios compared to SOTA methods. For instance, LLaVA1.5 equipped with HoloV preserves 95.8\% of the original performance after pruning 88.9\% of visual tokens, achieving superior efficiency-accuracy trade-offs.
\end{abstract}

\section{Introduction} \label{sec:introduction}





Multimodal Large Language Models (MLLMs) have demonstrated outstanding capabilities \cite{wu2023multimodal,caffagni2024revolution} in tasks such as image captioning \cite{koh2023generating,nguyen2023improving,chen2024sharegpt4v}, visual question answering \cite{guo2023images,zhao2024lova3,kuang2025natural}, and video understanding \cite{jin2024chat,ren2024timechat,wang2024internvideo2}. However, these models \cite{lin2023video,wang2024qwen2,li2024llava} typically require converting visual inputs into long sequence representations (\textit{i.e.}, visual tokens), which increases the computational complexity and cost of inference \cite{zhang2024beyond}, especially for high-resolution images \cite{li2024mini} and multi-frame videos \cite{maaz2024video}, where redundant visual information further exacerbates the computational overhead.

To address this challenge, researchers have introduced token pruning strategies \cite{liu2024multi,chen2024image,zhang2024sparsevlm,yan2025docpruner} that aim to retain the highlighted visual tokens as well as prune others for accelerating MLLM's inference. These methods typically define importance criteria for tokens, such as attention scores \cite{chen2024image,endo2024feather} or gradient information \cite{mao2025prune,mao2025efficient}, to quantify the significance of visual tokens, and less important tokens are pruned during the inference phase, which balances speed and performance, but with limitations.\\
\begin{wrapfigure}{r}{0.45\textwidth}
\setlength\intextsep{0pt}
\centering
\vspace{-0.25em}
\captionsetup{type=figure}
\includegraphics[width=0.445\textwidth]{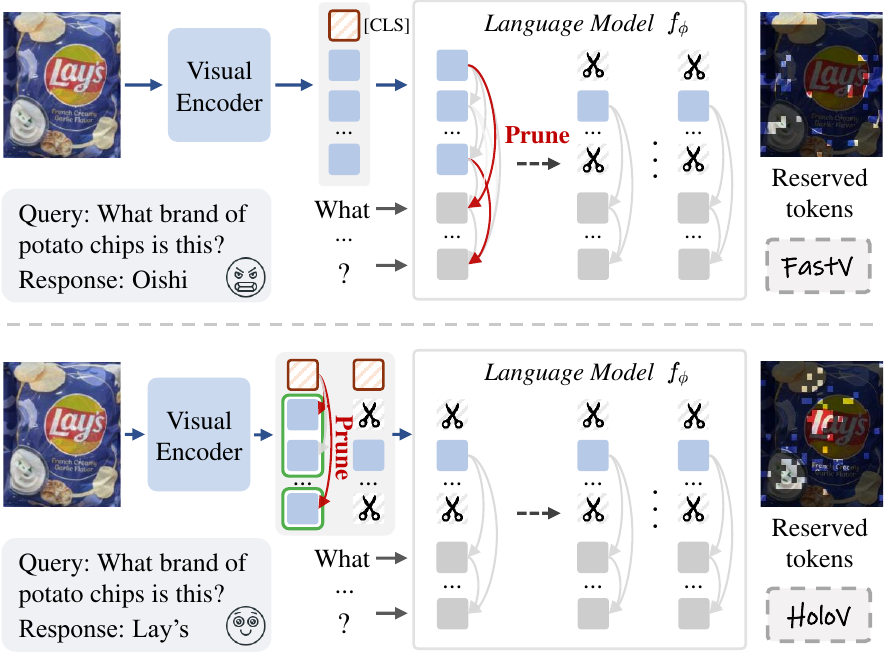}
\vspace{-1.4 em}
\caption{\small Snapshots of FastV and our HoloV.}
\vspace{-1.5em}
\label{fig:fastv}
\end{wrapfigure}
As shown in Fig. \ref{fig:fastv}, FastV \cite{chen2024image} is an intuitive solution that ranks visual tokens based on attention distributions across different layers, and then prunes the bottom $R$\% of tokens based on the computational budget, thus reducing visual token redundancy. Subsequently, more work has followed this paradigm \cite{ye2025fit,zhang2024sparsevlm,arif2024hired}, designing different strategies to prune redundant visual tokens via cross-modal (\textit{i.e.}, text-vision) attention from LLMs. Besides, there are vision-centric pruning methods \cite{wang2024cls,han2024rethinking,zhang2024token,shang2024llava,yang2024visionzip} (\textit{e.g.}, FasterVLM \cite{zhang2024cls}) that presume those visual tokens with low correlation to the [\texttt{CLS}] token in ViT \cite{dosovitskiy2020image}, or those exhibit duplicated features tokens \cite{feng2023efficient} to be redundant.

\begin{figure}[t]
  \centering
  \begin{subfigure}{0.245\linewidth}
    \includegraphics[width=\linewidth]{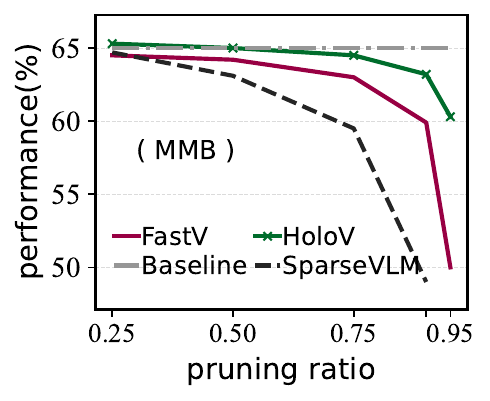}
    \label{fig1:11}
  \end{subfigure}
  \hfill
  \begin{subfigure}{0.245\linewidth}
    \includegraphics[width=\linewidth]{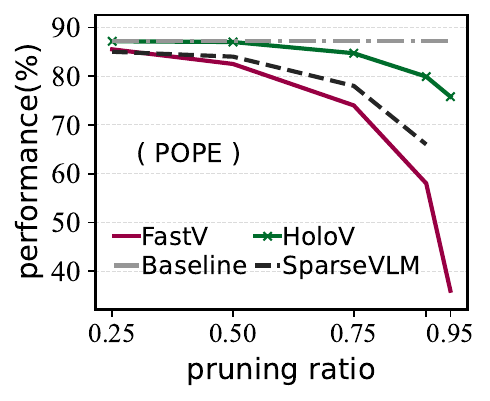}
    \label{fig1:12}
  \end{subfigure}
  \hfill
  \begin{subfigure}{0.245\linewidth}
    \includegraphics[width=\linewidth]{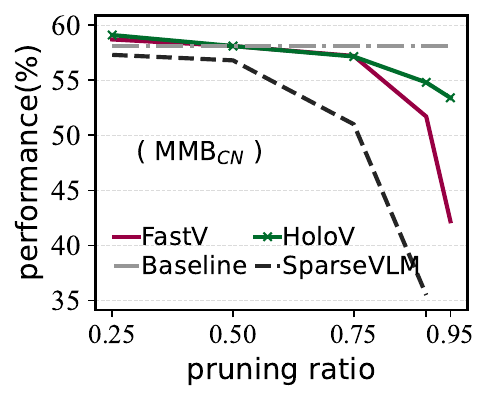}
    \label{fig1:13}
  \end{subfigure}
  \hfill
  \begin{subfigure}{0.245\linewidth}
    \includegraphics[width=\linewidth]{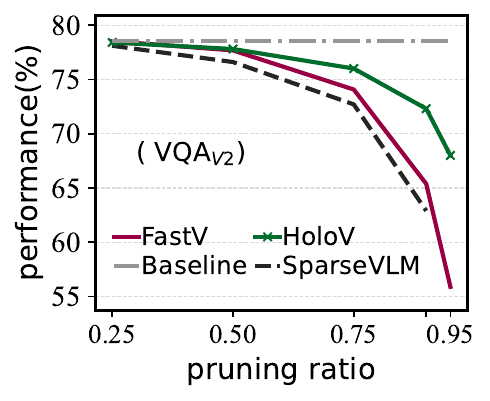}
    \label{fig1:14}
  \end{subfigure}\vspace{-1.75em}
  \caption{Relationship between performance and pruning ratios of
different baseline methods. As the token pruning ratio grows, the performance of these attention-first strategies degrades dramatically, while HoloV maintains the substantial performance even at 90\% and 95\% of the pruning ratios.}
  \label{fig:line1}\vspace{-1.75em}
\end{figure}

\begin{figure}[b]\vspace{-1.25em}
\includegraphics[width=\linewidth]{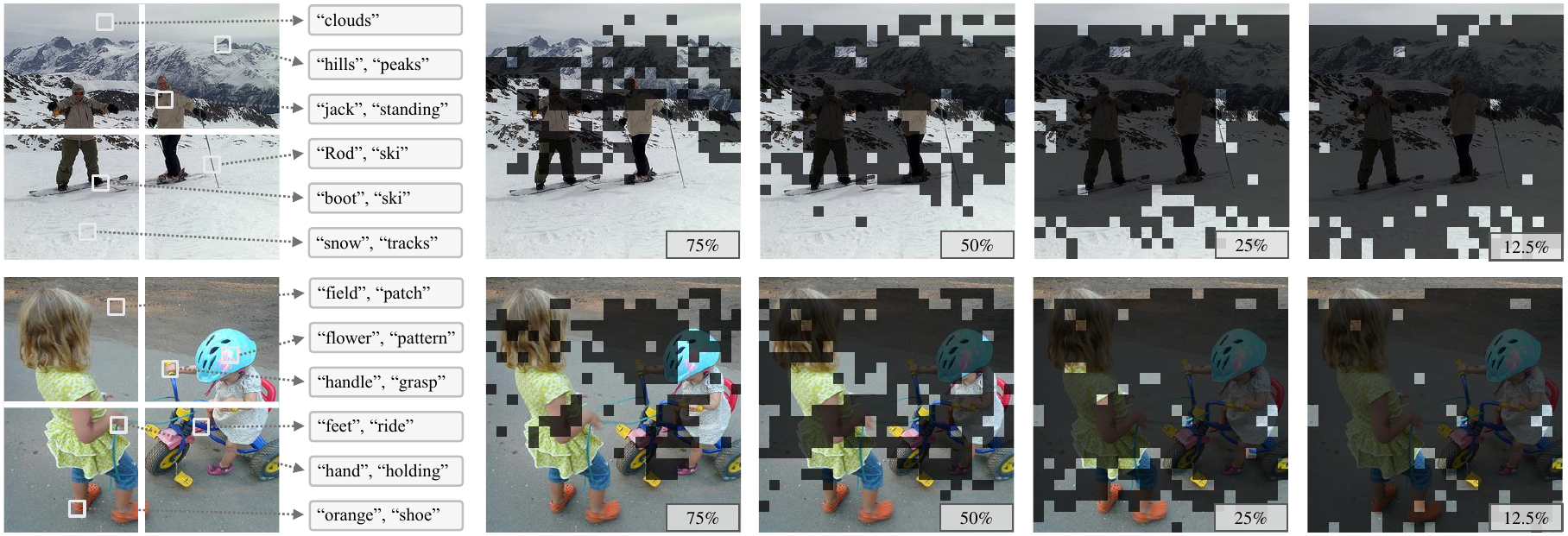}\vspace{-0.5em}
  \caption{\textsc{Left} - Examples of textual semantics corresponding to visual tokens from scattered crops. \textsc{Right} - Sparsification visualization examples of FastV, where retention ratios are tagged in the pics.} 
  \label{fig:case1}\vspace{-1em}
\end{figure}
Although these pruning methods can recognize the inefficiency of visual tokens in MLLMs, they are not consistently effective. As shown in Fig. \ref{fig:line1}, the performance decreases significantly as the pruning ratio increases. In our argument, this occurs because these approaches implicitly assume that \textit{visual tokens with high attention correspond to higher informativeness}, which disregards the spatial-semantic relations of the visual scene, \textit{i.e.}, they tend to retain tokens from localized salient regions where attention is drawn to, rather than those conducive to holistic semantic comprehension. Thus, at a high pruning ratio, such methods would only retain homologous tokens with higher scores. In a complex scene with multiple objects, retaining only "highlighted tokens" may sever relative positional and semantic connectivity information or lose key tokens associated with the subject, leading to a dramatic performance degradation. Besides, the attention mechanism introduces systematic biases \cite{wen2025token,wen2025stop}, \textit{i.e.}, the position encoding mechanism of transformer-based MLLMs may introduce spatial priors, those in upper and lower areas visual tokens usually being assigned higher attention weights as shown in Fig. \ref{fig:case1} right. This bias can distort the semantic contributions of the visual scene, leading the model to produce incorrect or logically contradictory inferences, or even hallucinations \cite{zheng2024reefknot,zou2025look}. Drawing inspiration from the above discussion, we raise the following question: \textit{“How to locate and preserve those not highlighted but critical to visual holistic understanding tokens?”}

Cognitive science research suggests that the human visual system forms a complete semantic understanding by integrating local features with global scene cues \cite{thorpe1996speed,adini2002context,peelen2009neural} (\textit{e.g.}, background textures and spatial layouts). In MLLMs, we analyzed the text-mapping relationships of different visual tokens through the strategy in \cite{neo2024towards}. As shown in Fig. \ref{fig:case1} left, the objects in a scene could be represented by a small number of scattered tokens, and the semantic relationships between those tokens from different regions facilitate the overall understanding, \textit{e.g.}, \textit{“snow”, “ski”, “hills”} are kind of self-explanatory.
Motivated by this insight, we propose HoloV, which explicitly balances overall semantic connectivity and contextual attention during visual token pruning, addressing the critical limitation of redundancy in attention-first strategies. Our analysis demonstrates the importance of preserving visual holistic context, offering a new perspective on efficient visual token pruning in MLLMs.
Through extensive experiments on diverse benchmarks and MLLM architectures, we demonstrate that HoloV consistently surpasses existing state-of-the-art token pruning approaches, achieving up to 88.9\% token reduction while preserving about 96\% of the original performance.
Besides, HoloV is model-agnostic and easily integrable into a wide range of MLLMs, making it well-suited for practical deployment.

\section{Related Work}
\label{sec:related_work}

\subsection{MLLMs and Their Challenges}
The recent remarkable success of Large Language Models (LLMs)~\cite{ouyang2022instructgpt, zhang2022opt, touvron2023llama, dubey2024llama3, luo2025llmanalyst} has spurred the trend of applying their strong capabilities to multimodal comprehension tasks, fostering the development of MLLMs~\cite{achiam2023gpt4, team2023gemini}. Leveraging open-source LLMs such as LLaMA families \cite{touvron2023llama,touvron2023llama2,dubey2024llama3}, MLLMs \cite{bai2023qwenvl,liu2024llava1.5,liu2024llavanext} have demonstrated enhanced adaptability across a range of visual understanding tasks, leading to a more profound ability to interpret the world. While this empowers LLMs with the capability of visual perception, the incorporation of lengthy visual tokens significantly escalates the computational burdens. Moreover, studies have shown that existing MLLMs still suffer from certain visual deficiencies~\cite{tong2024shortcoming,jiang2024marvel} and some hallucinations~\cite{huang2024hallucination,huang2025survey}. Some work mitigates these issues by increasing the resolution of input images or videos~\cite{luo2024llavahr, xu2024llavauhd}, but this further exacerbates the computational overhead. For example, LLaVA-1.5~\cite{liu2024llava} encodes a 336-resolution image into 576 visual tokens, while LLaVA-NeXT~\cite{liu2024llavanext} doubles the resolution and generates 2,880 tokens. LLaVA-OneVision \cite{li2024llavaonevision} represents an image using 7,290 visual tokens, and Video-LLaVA~\cite{lin2023videollava} faces even higher costs, as it must process numerous visual tokens from multiple frames during inference. These visual tokens
occupy a large portion of the context window of their LLMs. In this work, we conducted experiments and analysis on these representative models to verify HoloV's applicability.

\subsection{Visual Redundancy Identification}
In MLLMs, visual redundancy identification facilitates the distillation of visual tokens with high informativeness for faster inference. There are two main research directions:
a) Vision-centric strategies analyze the image's structure and feature distribution to discard less relevant visual tokens \cite{chen2024image,wang2024cls}. Existing approaches include spatial-similarity clustering (\textit{e.g.}, TokenLearner \cite{ryoo2021tokenlearner}), dynamic pruning based on attention scores \cite{han2024rethinking,yang2024enhancing,xu2024freepruner}, and using information bottleneck or entropy metrics during the prefilling stage to estimate background redundancy.
b) Instruction-centric strategies typically use cross-modal attention analysis or gradient accumulation to identify redundant tokens \cite{liu2024multi,zhu2024focusllava,song2024less}. Tokens with low attention or negligible gradient impact are deemed redundant \cite{he2024zipvl}. Building on this, some studies explore learned importance scoring, training a lightweight end-to-end model to predict each patch’s “instruction relevance,” enabling even finer-grained pruning \cite{jiang2024fopru,tu2024vl,ye2025fit}. As the existence of language bias in LLM may cause hallucinations, we use a vision-centric scheme.


\subsection{Visual Token Compression and Pruning}

The inclusion of visual information in MLLMs introduces long token sequences, leading to high computation and memory costs. For example, mini-Gemini-HD \cite{li2024mini} generates 2880 tokens from high-definition images, creating inference bottlenecks. To address this, research has focused on token compression and pruning techniques in Vision Transformers \cite{bolya2022token} and MLLMs \cite{huang2024ivtp}. Methods like LLaMA-VID \citep{li2023llama} and DeCo \citep{yao2024deco} address this by modifying models and adding training, which increases computational costs. ToMe~\citep{bolya2022tome} reduces tokens without training but disrupts early cross-modal interactions~\citep{xing2024PyramidDrop}. LLaVA-PruMerge \cite{shang2024llava} selectively retains key tokens while merging less critical ones based on key similarity. FasterVLM \cite{zhang2024cls} utilizes [\texttt{CLS}] attention scores from the visual encoder to re-rank and retain top visual tokens. FastV~\citep{chen2024image} and SparseVLM~\citep{zhang2024sparsevlm} focus on token selection using attention scores or cross-modal guidance, but overlook the role of token duplication and lack Flash-Attention~\citep{dao2022flashattention, dao2023flashattention2}. Our proposed HoloV maintains hard acceleration compatibility (\textit{e.g.}, Flash-Attention), and effectively retains visual holistic context during aggressive pruning.
\section{Preliminary and Motivation}\vspace{-0.25em}
\label{sec:analysis}


\subsection{Preliminary}\vspace{-0.1em}
\label{sec:pf}
\textbf{Architecture of MLLMs}. Given an MLLM {\mllm} parameterized by $\theta$, with a general architecture consisting of a text embedding layer, a vision encoder, a vision-text interface module, a text decoder consisting of $L$ number of transformer layers, and an affine layer which predicts the distribution of the next token. 
For an image-grounded text generation task, given a textual query $x$ and an input image $v$, 
{\mllm} first extracts vision features of $v$ by the vision encoder, and then converts them into visual tokens $z_v$ by MLP or Q-Former \cite{wadekar2024evolution} modules. Aligned vision tokens $z_v$ are concatenated with the query $x$ as input to the text decoder, and finally decoded into a textual response $y$ autoregressive, which is formulated as:
$y_t \sim p_{\theta}(\cdot | v, x, y_{<t}) \propto \textit{softmax}( f_{\theta}(\cdot | v, x, y_{<t}))$,
where $y_t$ indicates the $t^{th}$ token, $y_{<t}$ is the token sequence generated up to the time step $t$, and $f_{\theta}$ is the logit distribution.

\textbf{Attention mechanism}. Considering the computational burden associated with the length of visual tokens in MLLMs, many studies have followed the paradigm of using attention scores to evaluate the redundancy of visual tokens. Specifically, transformer-based MLLMs typically utilize causal self-attention \cite{ashish2017attention} to perform computation as:
$\operatorname{Self-attention}(\mathbf{Q}, \mathbf{K}, \mathbf{V}) = \operatorname{softmax}\left(\mathbf{Q} \cdot \mathbf{K}^{\top}/\sqrt{d_k}\right) \cdot \mathbf{V}$, where $d_k$ is the dimension of $\mathbf{K}$, the result of $\operatorname{softmax}\left(\mathbf{Q} \cdot \mathbf{K}^{\top}/\sqrt{d_k}\right)$ is known as the attention matrix. In this work, we focus on the attention received by visual tokens from the visual [\texttt{CLS}] token.

\subsection{Information Redundancy in Highlighted Tokens}
When token selection is based exclusively on attention scores, the model tends to retain similar clusters, resulting in information redundancy. As shown in Fig. \ref{fig:attnanyalsis} left, adjacent tokens with similar visual features frequently receive comparable attention scores, especially in regions characterized by flat backgrounds or repetitive textures. Their spatial proximity leads these tokens to capture overlapping features, making it hard to distinguish those not highlighted yet informative tokens.
\begin{figure}[h]
    \centering
    \includegraphics[width=1\linewidth]{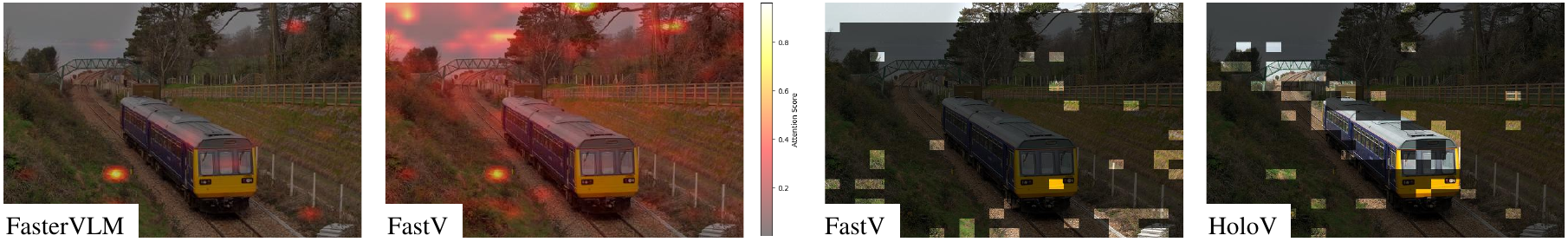}\vspace{-0.25em}
    \caption{\textsc{Left} -  Distribution map of visual token attention. \textsc{Right} - Visualization cases of FastV and HoloV. HoloV retains contextual tokens with rich semantics, while FastV contains much redundancy.}\vspace{-1.em}
    \label{fig:attnanyalsis}
\end{figure}

 
 \textbf{Positional Bias}. To further investigate attention-based token pruning methods, we take FastV as an example and visualize the distribution of the retained visual tokens. As illustrated in Fig. \ref{fig:attnanyalsis} right, the attention scores for image tokens present a consistent pattern: tokens located at the beginning and end of the sequence tend to have higher attention and are thus more likely to be preserved during pruning, leading to a positional bias. We extend our analysis by conducting statistics on samples from the text-based VQA task using the VQA V2 \cite{goyal2017making} dataset. Notably, even though these samples originate from a different task, the attention distributions of image tokens at the same layer remain highly similar, revealing recurring patterns. While the overall shape of the distributions varies slightly across layers, the set of tokens receiving relatively high attention remains stable. We suggest that this phenomenon occurs because all visual tokens are processed with text tokens in the same manner during decoding, leading to positional bias of text shift to the visual modality, \textit{e.g.}, boundary positions of text usually imply important information, but for images, targets are mostly located in the center.

\begin{wrapfigure}{r}{0.265\textwidth}
\setlength\intextsep{0pt}
\centering
\vspace{-2em}
\captionsetup{type=figure}
\includegraphics[width=1.01\linewidth]{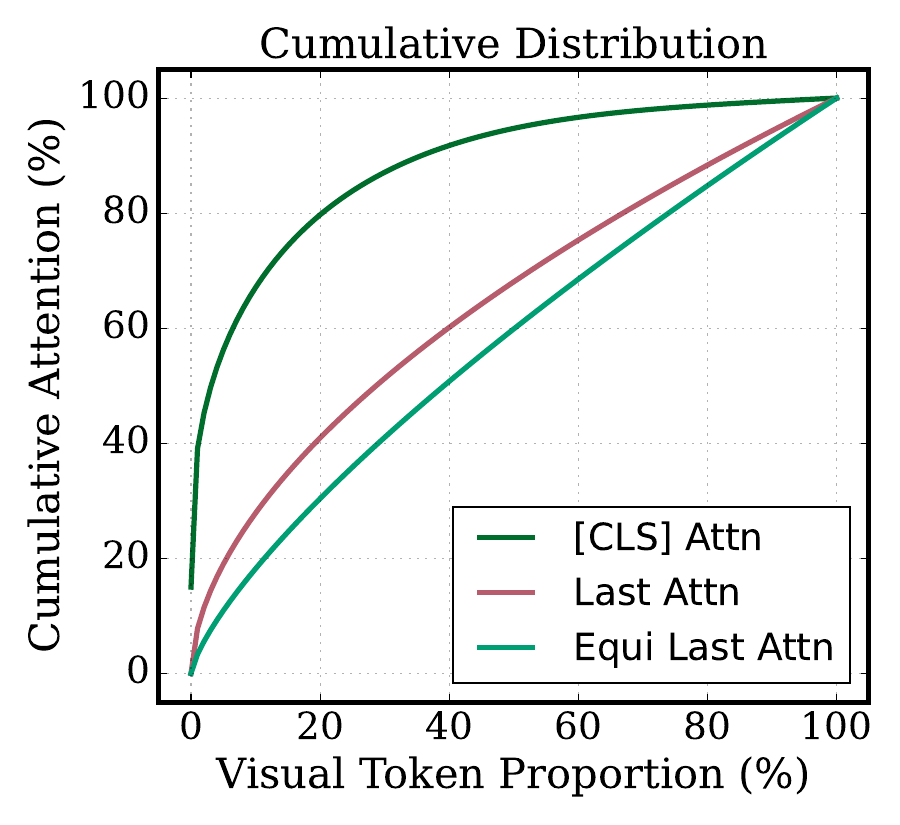}
\vspace{-2em}
\caption{\small Cumulative distribution of different attentions.}
\vspace{-2em}
\label{fig:attncurve}
\end{wrapfigure}
\textbf{Attention Dispersion}. In addition to positional bias, we further analyze the phenomenon of attention dispersion, i.e., a small subset of similar tokens receives the majority of attention, while most tokens are assigned low attention scores \cite{zhang2024cls}. Specifically, we compute the cumulative distribution of visual tokens sorted by their attention scores, as shown in Fig. \ref{fig:attncurve}. The curves of last-token attention \cite{chen2024image} and equi last attn with identical position embedding are noticeably less steep than that for [CLS] attention. It is evident that compared to [CLS] attention, text-vision attention tends to be dispersed over more visual tokens, \textit{e.g.}, the top 20\% of visual tokens account for only 40\% of the total attention.

\subsection{Holistic Context Trumps Local Duplicates}

Based on our previous analysis, attention-first token pruning methods suffer from over-localization due to positional bias and attention dispersion, \textit{i.e.}, over-reliance on attention scores disrupts spatial-semantic relationships, \textit{e.g.}, breaking occlusion hierarchies in multi-object interactions. 
Thus, our key insight is that visual token importance should be evaluated through {global contextual cohesion}, \textit{i.e.}, jointly considers holistic context and local saliency rather than isolated attention magnitudes. 

\begin{wrapfigure}{r}{0.3\textwidth}
\setlength\intextsep{0pt}
\centering
\vspace{-1.em}
\captionsetup{type=figure}
  \begin{subfigure}{\linewidth}
    \includegraphics[width=\linewidth]{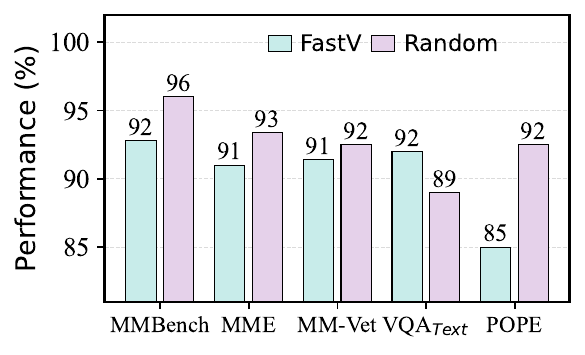}
    \label{fig21:11}
  \end{subfigure}\\
\vspace{-1.25 em}
  \begin{subfigure}{\linewidth}
    \includegraphics[width=\linewidth]{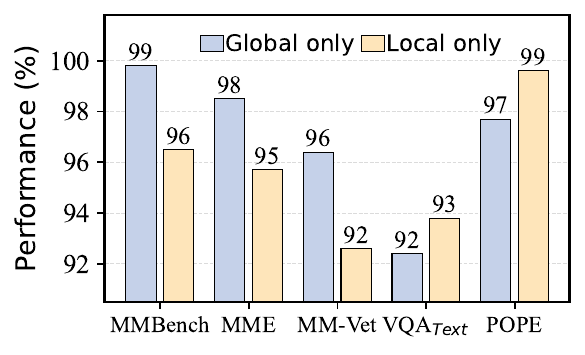}
    \label{fig21:12}
  \end{subfigure}
\vspace{-3em}
\caption{\small \textsc{Up} - FastV v.s. Random strategy. \textsc{Down} - Performance comparison of the thumbnail and local crops as inputs.}
\vspace{-1.5em}
\label{fig:randomexp}
\end{wrapfigure}
To further validate our hypothesis, we devised a straightforward holistic context retention strategy, \textit{i.e.}, pruning visual tokens through random masks to retain visual information from different regions. As shown in Fig. \ref{fig:randomexp} up, compared with FastV, this random strategy outperforms on more than half of the benchmarks, which demonstrates the significance of preserving holistic context for visual understanding. On the VQA text dataset, however, the random strategy failed, possibly because random pruning discards some salient fine-grained information. This result also suggests that local saliency is indispensable, especially for densely packed elements within small regions.

In addition, we conducted an exploratory experiment to investigate how holistic context contributes to visual understanding in MLLMs. Specifically, we use the global thumbnail and multiple local crops as visual input separately \cite{liu2024llavanext}, and evaluate performance on the two settings against various benchmarks. As shown in Fig. \ref{fig:randomexp} down, with only the global thumbnail yields strong results on general visual perception benchmarks such as MMBench \cite{liu2025mmbench}, MME \cite{fu2023mme}, and MM-Vet \cite{2024MMVet}, highlighting the inherent role of holistic context in guiding general visual understanding. On the contrary, using only local crops leads to poor performance in these general perception tasks but excels in fine-grained perception benchmarks such as TextVQA \cite{singh2019towards} and POPE \cite{li2023evaluating}, which suggests that local duplicated saliency can offer fine-grained visual information for semantic understanding.

\section{Methodology}  
\label{sec:method}
Building on the above analysis, we propose HoloV, which better preserves the holistic context of images for visual understanding. By removing redundant visual tokens before the LLM decoder, our approach could make MLLMs inference faster than methods that prune tokens within the LLM. An overview of our approach is depicted in Fig. \ref{fig:pipeline}. 
In what follows, we elaborate on how our HoloV guides overall visual token compression under a high pruning ratio to keep semantic completeness.

\vspace{-0.25em}
\subsection{HoloV Framework}
To address the pivotal question raised in Sec. \ref{sec:introduction} for effective and efficient visual token pruning, we propose HoloV framework, which leverages crop-wise adaptive allocation to decentralize attention over those non-highlighted but heterogeneous tokens. Fig. \ref{fig:pipeline} illustrates the core idea of HoloV. 
\begin{figure}[ht]
  \centering \vspace{-0.25em}
  \includegraphics[width=\linewidth]{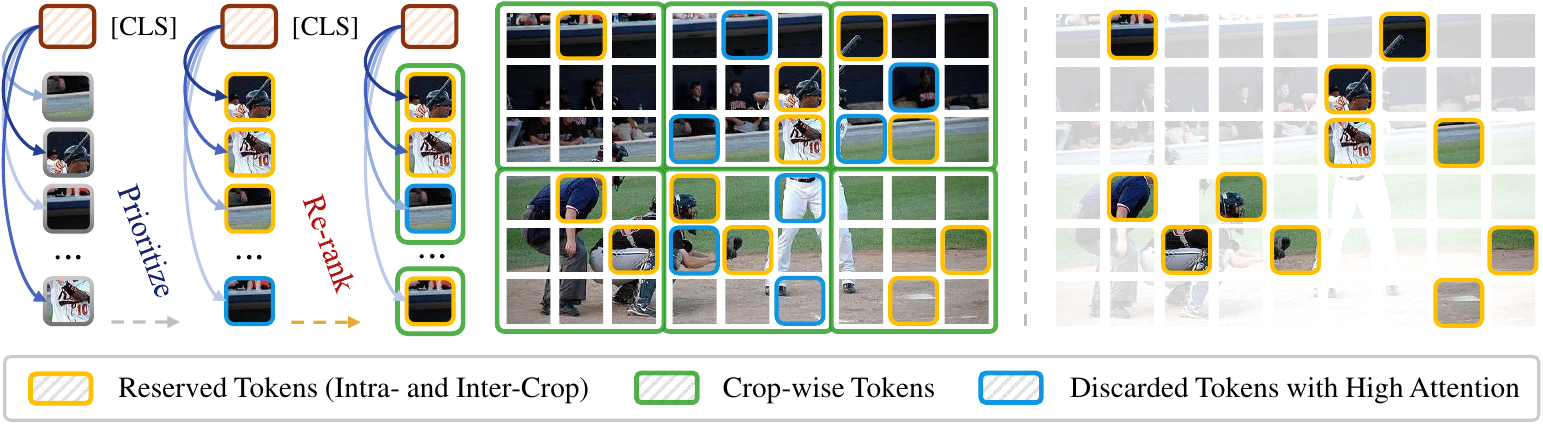}
  \caption{Illustration of HoloV. We re-rank highlighted visual tokens for holistic context retention.}
  \label{fig:pipeline}\vspace{-1em}
\end{figure}


Based on our findings about the positional bias,
We first rearrange visual tokens into local crops. Let the total number of image tokens be $N_v$, which is evenly partitioned into $\mathcal{C}$ crops. This enables the model to maintain spatial granularity and gather statistics both locally and globally.
Given the normalized embeddings $\mathbf{Z}_v^{c}\in\mathbb{R}^{M \times d}$ in $c$-th crop, we first compute intra-crop similarity matrix $\mathbf{S}^c$ as
\begin{equation}
\mathbf{S}^c = (\mathbf{1}-\mathbf{I}_M)\odot\mathbf{Z}_v^c {\mathbf{Z}_v^c}^\top ,
\end{equation}  
where $\odot$ denotes Hadamard product, and $\mathbf{I}_M$ is the identity matrix masking self-similarities. Then, we capture intra-crop diversity by the variance of semantic distribution, the formula is as follows
\begin{equation}
\mathcal{V}_i^c=\frac{1}{M-1} \sum\left(\mathbf{S}_{i, j}^c-\mu_i^c\right)^2,
\end{equation}
where a high value of $\mathcal{V}_i^c$ indicates that $i$-th token has diverse connections with others, the visual semantics expressed by the informative token is essential within the crop. To obtain holistic attention, we establish a balanced scoring mechanism combining contextual diversity and attention saliency.  Specifically, we merge variance $\mathcal{V}^c$ and [\texttt{CLS}] attention $\mathcal{A}^c$ in the crop using adaptive scaling:  
\begin{equation}
\mathcal{H}^c =  \gamma_c \mathcal{V}^c + \mathcal{A}^c , \text{ where } \gamma_c = \mathbb{E}[\|\mathcal{A}^c\|]/\mathbb{E}[\|\mathcal{V}^c\|].
\end{equation}
\textbf{Adaptive holistic token allocation.}
To preserve overall scene semantics and spatial diversity, we compute a crop-level priority score by averaging token scores within each crop. The total quota for selected image tokens $T'$ is dynamically allocated to crops according to their normalized crop-level importance. The allocation to each crop is discrete and capped, ensuring spatial coverage while preventing over-concentration on specific regions. We resolve rounding and overflow through an iterative reallocation procedure, so that crops with excess quota donate surplus tokens to those with remaining capacity, according to their crop-level scores.

We compute crop importance weights via   
\begin{equation}
w_c = {(\frac{1}{M}\sum_{t=1}^M \mathcal{H}_t^c)^\tau}/{\sum_{c'=1}^\mathcal{C} (\frac{1}{M}\sum_{t=1}^M \mathcal{H}_{t}^{c'})^\tau},
\end{equation}  
where $\tau$ controls the sharpness of allocation. Thus, initial quota $q_c = \lfloor w_c \hat{N}_v \rfloor$, where $ \hat{N}_v$ denotes the number of retained tokens.
When the allocated tokens overflow or fall short, we redistribute residual tokens. For overflow, the quota is changed by $q_c = \min(q_c + \Delta_c, M), \Delta_c \propto w_c \cdot (M - q_c)$, while for fall short, we allocate the remaining quota to the crop with the highest weight. In this way, HoloV adaptively adjusts its compression degree according to the informativeness of different crops.

\textbf{Top-$k$ visual token selection.}  
Within each crop, select visual tokens by maximizing:  
\begin{equation}
\text{argmax}_{\Omega_c \subset \{1,...,M\}} \sum \mathcal{H}^c , \text{ subject to }|\Omega_c| = q_c \,,
\end{equation}  
which ensures both crop-wise local saliency and global relevance.
We retain top-$k$ visual tokens in each crop, where $k$ is determined by the quota $q_c$ in the allocation. 
 By performing token pruning before the LLM decoder, we dynamically adjust the number of visual tokens as input to the language model based on the actual computational budget, thus accelerating the MLLM inference.

\subsubsection{Fast Visual Context Refetching}
Motivated by the attention sinks \cite{zhang2024redundancy}, and information loss during visual token pruning, we further propose visual context refetching to fast supplement the visual holistic context. Specifically, we treat pruned tokens as supplementary evidence, re-injecting them into the MLLM through Feed Forward Network (FFN) as “key-value memory” at the middle trigger layer. This \textit{refetch} mechanism occurs when the model exhibits high uncertainty during inference, achieving effective and efficient visual information replenishment. Limited by space, the details can be found in Appendix \ref{apx:fastvcr}. 
\subsection{Theoretical Analysis}
\label{sec:theanalysis}
To further justify the trustworthiness of our proposed HoloV, we provide a theoretical analysis of it. Under Assumption~\ref{assump:context}, for any pruned token, there exists a retained token that is sufficiently close in the embedding space, with bounded context variance. By leveraging the \textit{Lipschitz continuity} \cite{bethune2022pay} of the transformer layer, we can bound the semantic difference between the outputs on the original and pruned token sets. The residual error introduced by the scoring threshold is also controlled. Combining these components, we obtain the stated upper bound.
More details are in Appendix~\ref{apx:analysis}.
\subsection{Computational Complexity}
As language instructions are much shorter than visual tokens, we focus on the FLOPs contributed by visual tokens. Let $n$ denote the number of visual tokens, $d$ the hidden size, and $m$ the FFN intermediate size (with SwiGLU). For the prefill stage, the FLOPs per transformer layer can be approximated as $a n^2 d + b n d^2 + c n d m$, where $a$, $b$, and $c$ are constants. If the token count is reduced by a ratio $R$ ($\hat{n} = (1-R)n$), the FLOPs reduction ratio is:
\begin{equation}
F  = 1 - \frac{a \hat{n}^2 d + b \hat{n} d^2 + c \hat{n} d m}{a n^2 d + b n d^2 + c n d m}.
\end{equation}
For large $n$, the quadratic term dominates, so $F \approx 1 - (1-R)^2 = 2R - R^2$. Thus, the reduction is slightly better than linear in $R$. In the decode stage (with KV cache), the complexity becomes linear in $n$, and the FLOPs per layer are $b d^2 + (b d + c d m) n$, so the reduction is nearly proportional to $R$. HoloV speeds up inference by pruning ahead of the LLM to avoid KV cache inefficiency. \vspace{-0.75em}




\section{Experiments}\vspace{-0.25em}
\label{sec:exp}

\renewcommand{\multirowsetup}{\centering}
\definecolor{mygray}{gray}{.92}
\definecolor{mygreen1}{RGB}{253, 244, 244}
\definecolor{mygreen2}{RGB}{238, 243, 243}
\definecolor{ForestGreen}{RGB}{34,139,34}
\newcommand{\fg}[1]{\mathbf{\mathcolor{ForestGreen}{#1}}}
\definecolor{Forestred}{RGB}{220,50,50}
\newcommand{\fr}[1]{\mathbf{\mathcolor{Forestred}{#1}}}

\begin{table}[t]
    \centering
    \setlength{\tabcolsep}{3.5pt}
    \footnotesize
    \caption{Performance comparison of various methods across different benchmarks. Results are shown for different pruning ratios, with accuracy and average performance highlighted. Best results in \textcolor{MidnightBlue}{\textbf{blue}}.}.
    \vspace{0.25em}
    \label{tab1:main}
    \resizebox{\linewidth}{!}{
    \begin{tabular}{l | *{9}{>{\centering\arraybackslash}p{0.92cm}} |>{\centering\arraybackslash}p{1.15cm}}
        \textbf{\;Methods} & \textbf{GQA} & \textbf{MMB} & \textbf{MMB}$_{\text{CN}}$ & \textbf{MME} & \textbf{POPE} & \textbf{SQA} & \textbf{VQA}$_{\text{V2}}$ & \textbf{VQA}$_{\text{Text}}$ & \textbf{VizWiz}  & \makecell[c]{\textbf{Average}}\\
        \midrule
        
        \textcolor{gray}{Upper Bound, 576 Tokens} & \textcolor{gray}{61.9} & \textcolor{gray}{64.7} & \textcolor{gray}{58.1} & \textcolor{gray}{1862} & \textcolor{gray}{85.9} & \textcolor{gray}{69.5} & \textcolor{gray}{78.4} & \textcolor{gray}{58.2} & \textcolor{gray}{50.0} & \multirow{1}*{\textcolor{gray}{100\%}} \\
        \midrule

        \rowcolor{mygray}
        LLaVA-1.5 \textcolor{gray}{7B} & \multicolumn{10}{c}{\textit{Retain 192 Tokens} \ $\fg{(\downarrow 66.7\%)}$}\\
        ToMe \texttt{\scriptsize{(ICLR23)}} & 54.3 & 60.5 & - & 1563 & 72.4 & 65.2 & 68.0 & 52.1 & - & \multirow{1}*{88.5\%} \\
        FastV \texttt{\scriptsize{(ECCV24)}} & 52.7 & 61.2 & 57.0 & 1612 & 64.8 & 67.3 & 67.1 & 52.5 & 50.8  & \multirow{1}*{90.5\%} \\
        MustDrop \texttt{\scriptsize{(2024.11)}} & 58.2 & 62.3 & 55.8 & 1787 & 82.6 & 69.2 & 76.0 & 56.5 & 51.4  & 97.2\% \\
        LLaVA-PruMerge \texttt{\scriptsize{(ICCV25)}}\;\; & 54.3 & 59.6 & 52.9 & 1632 & 71.3 & 67.9 & 70.6 & 54.3 & 50.1  & 91.4\% \\
        PDrop \texttt{\scriptsize{(CVPR25)}} & 57.1 & 63.2 & 56.8 & 1766 & 82.3 & 68.8 & 75.1 & 56.1 & 51.1  & 96.7\% \\
        FiCoCo-V \texttt{\scriptsize{(2025.03)}} & 58.5 & 62.3 & 55.3 & 1732 & 82.5 & 67.8 & 74.4 & 55.7 & 51.0  & 96.1\% \\
        HiRED \texttt{\scriptsize{(AAAI25)}} & 58.7 & 62.8 & 54.7 & 1737 & 82.8 & 68.4 & 74.9 & 47.4 & 50.1  & 94.6\%     \\
        VisionZip \texttt{\scriptsize{(CVPR25)}} & \textcolor{MidnightBlue}{\textbf{59.3}} & 64.5 & {57.3} & 1767 &86.4 &68.9 & \textcolor{MidnightBlue}{\textbf{76.8}} & 57.3 & \textcolor{MidnightBlue}{\textbf{51.6}}  & 98.1\%  \\
        SparseVLM \texttt{\scriptsize{(ICML25)}} & 57.6 & 62.5 & 53.7 & 1721 & {83.6} & 69.1 & 75.6 & 56.1 & 50.5  & 96.1\% \\
        DART \texttt{\scriptsize{(EMNLP25)}} & 58.9 & 63.6 & 57.0 & \textcolor{MidnightBlue}{\textbf{1856}} & 82.8 & {69.8} & 76.7 & 57.4 & 51.1  & 98.5\% \\
        \rowcolor{mygreen2}
        HoloV \scriptsize{(Ours)} & 59.0 & \textcolor{MidnightBlue}{\textbf{65.4}} & \textcolor{MidnightBlue}{\textbf{58.0}} & {1820} & \textcolor{MidnightBlue}{\textbf{85.6}} & \textcolor{MidnightBlue}{\textbf{69.8}} & 76.7 & \textcolor{MidnightBlue}{\textbf{57.4}} & 50.9  & \textcolor{MidnightBlue}{\textbf{99.2\%}} \\
        \midrule

        \rowcolor{mygray}
        LLaVA-1.5 \textcolor{gray}{7B} & \multicolumn{10}{c}{\textit{Retain 128 Tokens} \ $\fg{(\downarrow 77.8\%)}$}\\
        ToMe \texttt{\scriptsize{(ICLR23)}} & 52.4 & 53.3 & - & 1343 & 62.8 & 59.6 & 63.0 & 49.1 & - & \multirow{1}*{80.4\%} \\
        FastV \texttt{\scriptsize{(ECCV24)}} & 49.6 & 56.1 & 56.4 & 1490 & 59.6 & 60.2 & 61.8 & 50.6 & 51.3  & \multirow{1}*{85.4\%}\\
        MustDrop \texttt{\scriptsize{(2024.11)}} & 56.9 & 61.1 & 55.2 & 1745 & 78.7 & 68.5 & 74.6 & 56.3 & \textcolor{MidnightBlue}{\textbf{52.1}}  & 95.7\% \\
        LLaVA-PruMerge \texttt{\scriptsize{(ICCV25)}} & 53.3 & 58.1 & 51.7 & 1554 & 67.2 & 67.1 & 68.8 & 54.3 & 50.3  & 89.4\%  \\
        PDrop \texttt{\scriptsize{(CVPR25)}} & 56.0 & 61.1 & 56.6 & 1644 & {82.3} & 68.3 & 72.9 & 55.1 & 51.0 & 94.9\% \\
        FiCoCo-V \texttt{\scriptsize{(2025.03)}} & 57.6 & 61.1 & 54.3 & 1711 & 82.2 & 68.3 & 73.1 & 55.6 & 49.4  & 94.9\% \\
        HiRED \texttt{\scriptsize{(AAAI25)}} & 57.2 & 61.5 & 53.6 & 1710 & 79.8 & 68.1 & 73.4 & 46.1 & 51.3  & 93.1\% \\
        VisionZip \texttt{\scriptsize{(CVPR25)}} & 57.6 & 63.4 & {56.7} & 1768 &84.7 &68.8 & 75.6 & 56.8 & 52.0  & 97.2\% \\
        SparseVLM
         \texttt{\scriptsize{(ICML25)}} & 56.0 & 60.0 & 51.1 & 1696 & 80.5 & 67.1 & 73.8 & 54.9 & 51.4 & 93.8\% \\
        DART \texttt{\scriptsize{(EMNLP25)}} & \textcolor{MidnightBlue}{\textbf{57.9}} & 63.2 & \textcolor{MidnightBlue}{\textbf{57.0}} & \textcolor{MidnightBlue}{\textbf{1845}} & 80.1 & 69.1 & \textcolor{MidnightBlue}{\textbf{75.9}} & 56.4 & 51.7  & 97.5\% \\
        \rowcolor{mygreen2} HoloV \scriptsize{(Ours)} & 57.7 & \textcolor{MidnightBlue}{\textbf{63.9}} & {{56.5}} & 1802 & \textcolor{MidnightBlue}{\textbf{84.0}} & \textcolor{MidnightBlue}{\textbf{69.8}} & 75.5 & \textcolor{MidnightBlue}{\textbf{56.8}} & 51.5  & \textcolor{MidnightBlue}{\textbf{98.0\%}} \\
        \midrule

        \rowcolor{mygray}
        LLaVA-1.5 \textcolor{gray}{7B} & \multicolumn{10}{c}{\textit{Retain 64 Tokens} \ $\fg{(\downarrow 88.9\%)}$}\\
        ToMe \texttt{\scriptsize{(ICLR23)}} & 48.6 & 43.7 & - & 1138 & 52.5 & 50.0 & 57.1 & 45.3 & -  & \multirow{1}*{70.1\%}\\
        FastV \texttt{\scriptsize{(ECCV24)}} & 46.1 & 48.0 & 52.7 & 1256 & 48.0 & 51.1 & 55.0 & 47.8 & 50.8  & 76.7\% \\
        MustDrop \texttt{\scriptsize{(2024.11)}} & 53.1 & 60.0 & 53.1 & 1612 & 68.0 & 63.4 & 69.3 & 54.2 & 51.2  & 90.1\%    \\
        LLaVA-PruMerge \texttt{\scriptsize{(ICCV25)}} & 51.9 & 55.3 & 49.1 & 1549 & 65.3 & 68.1 & 67.4 & 54.0 & 50.1 & 87.7\% \\
        PDrop \texttt{\scriptsize{(CVPR25)}} & 41.9 & 33.3 & 50.5 & 1092 & 55.9 & 68.6 & 69.2 & 45.9 & 50.7 & 77.5\% \\
        FiCoCo-V \texttt{\scriptsize{(2025.03)}} & 52.4 & 60.3 & 53.0 & 1591 & {76.0} & 68.1 & 71.3 & 53.6 & 49.8  & 91.5\% \\
        HiRED \texttt{\scriptsize{(AAAI25)}} & 54.6 & 60.2 & 51.4 & 1599 & 73.6 & 68.2 & 69.7 & 44.2 & 50.2  & 89.4\% \\
        VisionZip \texttt{\scriptsize{(CVPR25)}} & 55.1 & 60.1 & \textcolor{MidnightBlue}{\textbf{55.4}} & 1690 & 77.0 & 69.0 & 72.4 & \textcolor{MidnightBlue}{\textbf{55.5}} & \textcolor{MidnightBlue}{\textbf{52.9}}  & 94.5\% \\
        SparseVLM \texttt{\scriptsize{(ICML25)}} & 52.7 & 56.2 & 46.1 & 1505 & 75.1 & 62.2 & 68.2 & 51.8 & 50.1  & 87.3\% \\
        DART \texttt{\scriptsize{(EMNLP25)}} & \textcolor{MidnightBlue}{\textbf{55.9}} & 60.6 & 53.2 & \textcolor{MidnightBlue}{\textbf{1765}} & 73.9 & \textcolor{MidnightBlue}{\textbf{69.8}} & 72.4 & 54.4 & 51.6  & 93.9\% \\
        \rowcolor{mygreen2} HoloV \scriptsize{(Ours)} & 55.3 & \textcolor{MidnightBlue}{\textbf{63.3}} & 55.1 & 1715 & \textcolor{MidnightBlue}{\textbf{80.3}} & 69.5 & \textcolor{MidnightBlue}{\textbf{72.8}} & 55.4 & 52.8 & \textcolor{MidnightBlue}{\textbf{95.8\%}} \\
	\end{tabular}
    }\vspace{-1em}
\end{table}

\subsection{Experimental Setup}\vspace{-0.25em}
\textbf{Benchmarks.} We conducted experiments on several widely used visual understanding benchmarks.
For image understanding task, we performed experiments on ten widely used benchmarks, including GQA \citep{hudson2019gqa}, MMBench (MMB) and MMB-CN \citep{liu2025mmbench}, MME \citep{fu2023mme}, POPE~\citep{li2023evaluating}, VizWiz \citep{bigham2010vizwiz}, SQA (ScienceQA) \citep{lu2022learn}, VQA$_{\text{V2}}$ (VQA V2) \citep{goyal2017making}, VQA$_{\text{Text}}$ (TextVQA) \citep{singh2019towards}, and MM-Vet ~\citep{2024MMVet}.  Video QA benchmarks include MSVD-QA and MSRVTT-QA \cite{xu2017video}. All experiments on these benchmarks follow the default settings. More details of the benchmarks are provided in Appendix \ref{apx:dataset}.


\begin{figure}[t]
  \centering\vspace{-0.5em}
  \begin{subfigure}{0.245\linewidth}
    \includegraphics[width=\linewidth]{figures/zou_exp4-1-VQAv2.pdf}
    \label{fig2:11}
  \end{subfigure}
  \hfill
  \begin{subfigure}{0.245\linewidth}
    \includegraphics[width=\linewidth]{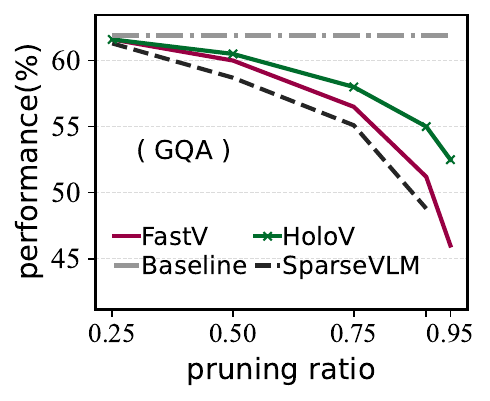}
    \label{fig2:12}
  \end{subfigure}
  \hfill
  \begin{subfigure}{0.245\linewidth}
    \includegraphics[width=\linewidth]{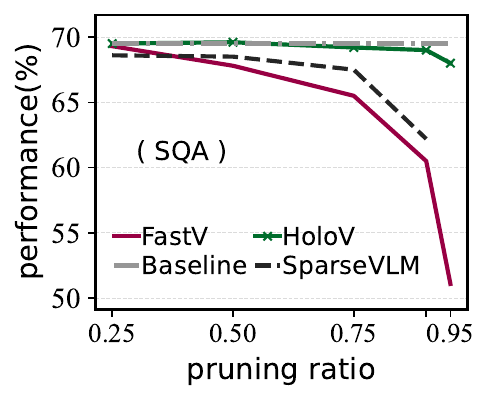}
    \label{fig2:13}
  \end{subfigure}
  \hfill
  \begin{subfigure}{0.245\linewidth}
    \includegraphics[width=\linewidth]{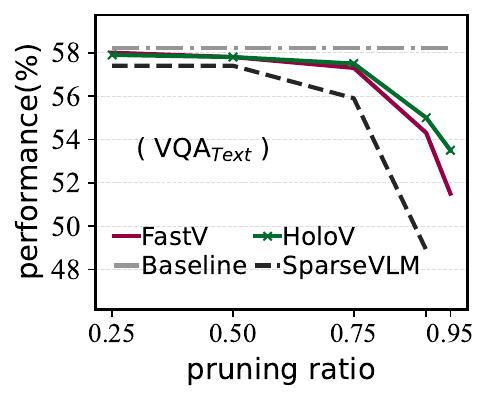}
    \label{fig2:14}
  \end{subfigure}\\ \vspace{-1.em}
  \begin{subfigure}{0.245\linewidth}
    \includegraphics[width=\linewidth]{figures/zou_exp4-1-POPE.pdf}
    \label{fig2:21}
  \end{subfigure}
  \hfill
  \begin{subfigure}{0.245\linewidth}
    \includegraphics[width=\linewidth]{figures/zou_exp4-1-MMB.pdf}
    \label{fig2:22}
  \end{subfigure}
  \hfill
  \begin{subfigure}{0.245\linewidth}
    \includegraphics[width=\linewidth]{figures/zou_exp4-1-MMBcn.pdf}
    \label{fig2:23}
  \end{subfigure}
  \hfill
  \begin{subfigure}{0.245\linewidth}
    \includegraphics[width=\linewidth]{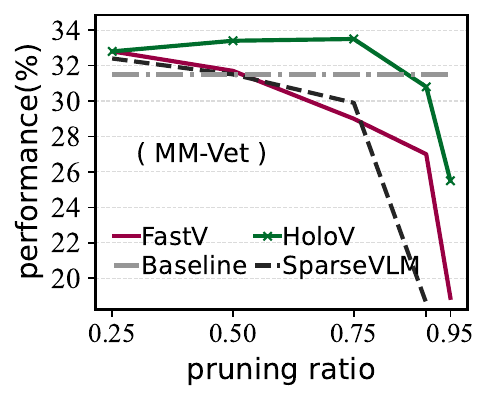}
    \label{fig2:24}
  \end{subfigure}\vspace{-1.5em}
  \caption{Comparison of different methods across multiple benchmarks under varying pruning ratios.}\vspace{-1.5em}
  \label{fig:22}
\end{figure}
\textbf{Comparison methods.} We compare our approach with several representative methods for accelerating multi-modal language models (MLLMs) via token reduction, including ToMe~\citep{bolya2022tome}, FastV~\citep{chen2024image}, SparseVLM~\citep{zhang2024sparsevlm}, HiRED~\citep{arif2024hired}, LLaVA-PruMerge~\citep{shang2024llava}, PDrop~\citep{xing2024PyramidDrop}, MustDrop~\citep{liu2024multi}, FasterVLM~\citep{zhang2024cls}, GlobalCom$^2$\citep{liu2025compression}, VisionZip~\citep{yang2024visionzip}, DART~\citep{wen2025stop}. These baselines employ diverse strategies such as token merging, attention-based pruning, adaptive allocation, and hierarchical retention to improve efficiency by reducing redundant tokens. Each method offers a unique perspective on balancing computational cost and model performance. More details of these baselines are provided in Appendix \ref{apx:baseline}.

\vspace{-0.5em}
\subsection{Main Results}\vspace{-0.5em}
\textbf{General-purpose benchmarks}. We evaluate the performance of HoloV on general-purpose datasets, \textit{i.e.}, GQA, MM-Vet, MME, MMBench, SQA, and VizWiz. As shown in Tab. \ref{tab1:main}, HoloV consistently outperforms competing approaches at different pruning ratios, \textit{e.g.}, HoloV removes up to 88.9\% of visual tokens with only a \underline{4.2\%} performance drop, and 77.8\% with just \underline{2\%} on average. 
\begin{wrapfigure}{r}{0.26\textwidth}
\setlength\intextsep{0pt}
\centering
\captionsetup{type=figure}
\includegraphics[width=1.02\linewidth]{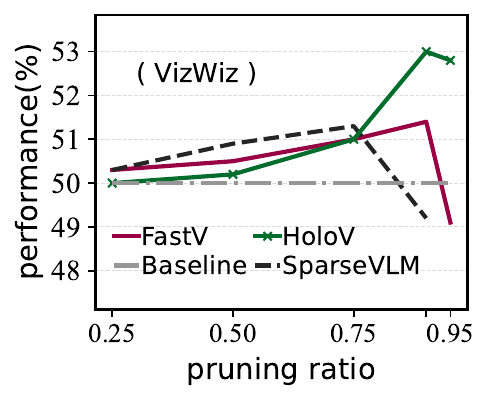}
\vspace{-2em}
\caption{\small Performance of different methods on VizWiz under varying pruning ratios.}
\vspace{-1.25em}
\label{fig:vizwiz}
\end{wrapfigure} Further, we show more results under varying pruning ratios, as shown in Fig. \ref{fig:22}, the performance of FastV and SparseVLM drops dramatically under high pruning ratios, while HoloV maintains robust performance with relatively minor losses at all pruning ratios on SQA and MMBench. On MMBench$_{CN}$ and MM-Vet, HoloV even achieves higher than baseline (unpruned) scores at pruning ratios of 25\%, 50\%, and 75\% (MM-Vet), then the score slowly drops as the pruning ratio increases. For VizWiz evaluation, the result in Fig. \ref{fig:vizwiz} indicates that HoloV can consistently obtain performance improvements at different pruning ratios, even at 95\%, which means HoloV effectively retains visual holistic semantics.

\textbf{Hallucination benchmarks validation}. We conduct the hallucination evaluations on POPE and MME benchmarks, with results on LLaVA-1.5-7B presented in Tab. \ref{tab1:main}, where the proposed HoloV shows robust capabilities, and the performance significantly exceeds the results of the compared SOTA methods, \textit{e.g.}, with a pruning rate of 88.9\%, HoloV achieves 80.3\% accuracy compared to 76\% for the second runner-up on POPE, and achieved desirable performance on MME evaluation, compared to other comparative approaches.

\subsection{HoloV with Higher Resolution}
For further comprehensive evaluation, we also evaluated HoloV for LLaVA-NeXT on different benchmarks mentioned above, with comparison to current SOTA approaches. LLaVA-NeXT introduces a new image processing method, leading to dynamic lengths of visual embeddings for various image inputs. Thus, during the evaluation, 320 visual tokens has been kept (from up to 2880 raw tokens). As shown in Table \ref{tab2:main}, the evaluation results of all various benchmarks show that HoloV obtained the highest score on almost every track, and has an average of 95. 6\%, much higher than the current SOTA of 93.3\%. 

\begin{wraptable}{r}{0.533\textwidth}
\centering
\vspace{-0.25em}
\caption{\small Video QA Evaluations of different methods with 50\% of visual tokens retained.  HoloV beats SOTA.} \vspace{-0.25em}
\scalebox{0.75}{
\begin{tabular}{@{}l|ccccccc@{}}
\multirow{2}{*}{\textbf{\,Methods}}  & \multicolumn{2}{c}{\textbf{MSVD-QA}} & \multicolumn{2}{c}{\textbf{MSRVT-QA}}  & \multicolumn{2}{c}{\textbf{Avgerge}}  \\           
& Acc.    & Score     & Acc.     & Score         & Acc.    & Score       \\ \midrule
Video-ChatGPT \textcolor{gray}{7B}              
& 64.9        & 3.3          & 49.3         & 2.8           &57.1 & 3.1
\\
\midrule
\,Video-LLaVA \textcolor{gray}{7B}     
& 70.2 & 3.9 & 57.3 & 3.5  & 63.8 &3.7 \\
\,FastV  \texttt{\scriptsize{(ECCV24)}}
& 71.0 &3.9 &  55.0  & 3.5   & 63.0 &3.7 \\
\,FasterVLM \texttt{\scriptsize{(ICCV25)}}
& 70.5 & 3.9 & 56.2 & 3.5  & 63.4 &3.7 \\
\,DART \texttt{\scriptsize{(EMNLP25)}} & 71.0 &4.0 &  \textcolor{MidnightBlue}{\textbf{56.7}}  &{3.6}  & 58.0 &3.7 \\
\rowcolor{mygreen2}
\,HoloV \scriptsize{(Ours)}   &\textcolor{MidnightBlue}{\textbf{71.0}} &\textcolor{MidnightBlue}{\textbf{4.0}} & 56.5  &\textcolor{MidnightBlue}{\textbf{3.6}} & \textcolor{MidnightBlue}{\textbf{63.7}} & \textcolor{MidnightBlue}{\textbf{3.7}} \\ 
\end{tabular}}
\vspace{-4.5em}
\label{tab:main_table_video}
\end{wraptable}
Besides, on video understanding benchmarks,  HoloV maintains close to the original performance, significantly outperforming FasterVLM and FastV, as shown in Table \ref{tab:main_table_video}. This demonstrates the value of HoloV when it comes to high-resolution visual input.

\subsection{Efficiency Analysis} \label{sec:eff}

\begin{table}[t]
    \centering
    \setlength{\tabcolsep}{3.5pt}
    \footnotesize
    \caption{Performance comparison of various methods across different benchmarks. Results are shown for different pruning ratios, with accuracy and average performance highlighted. Best results in \textcolor{MidnightBlue}{\textbf{blue}}.}
    \label{tab2:main}
    \vspace{0.25em}
    \resizebox{\linewidth}{!}{
    \begin{tabular}{l | *{9}{>{\centering\arraybackslash}p{0.92cm}} | >{\centering\arraybackslash}p{1.15cm}}
        \textbf{\;Methods} & \textbf{GQA} & \textbf{MMB} & \textbf{MMB}$_{\text{CN}}$ & \textbf{MME} & \textbf{POPE} & \textbf{SQA} & \textbf{VQA}$_{\text{V2}}$ & \textbf{VQA}$_{\text{Text}}$ & \textbf{VizWiz}  & \makecell[c]{\textbf{Average}}\\
        \midrule
        
         \textcolor{gray}{Upper Bound, 2880 Tokens} & \textcolor{gray}{64.2} & \textcolor{gray}{67.4} & \textcolor{gray}{60.6} & \textcolor{gray}{1851} & \textcolor{gray}{86.5} & \textcolor{gray}{70.1} & \textcolor{gray}{81.8} & \textcolor{gray}{64.9} & \textcolor{gray}{57.6} &  \textcolor{gray}{100\%} \\
          \midrule
          
       \rowcolor{mygray}
        LLaVA-NeXT \textcolor{gray}{7B} & \multicolumn{10}{c}{\textit{Retain 320 Tokens} \ $\fg{(\downarrow 88.9\%)}$} \\

        FastV \texttt{\scriptsize{(ECCV24)}} & 55.9 & 61.6 & 51.9 & 1661 & 71.7 & 62.8 & 71.9 & 55.7 & 53.1  & 88.0\% \\

        LLaVA-PruMerge \texttt{\scriptsize{(ICCV25)}}\;\; & 53.6 & 61.3 & 55.3 & 1534 & 60.8 & 66.4 & 69.7 & 50.6 & 54.0  & 85.6\% \\

       PDrop \texttt{\scriptsize{(CVPR25)}} & 56.4 & 63.4 & 56.2 & 1663 & 77.6 & 67.5 & 73.5 & 54.4 & 54.1  & 90.9\% \\

        MustDrop \texttt{\scriptsize{(2024.11)}} & 57.3 & 62.8 & 55.1 & 1641 & 82.1 & 68.0 & 73.7 & \textcolor{MidnightBlue}{\textbf{59.9}} & 54.0  & 92.2\%    \\

       FasterVLM \texttt{\scriptsize{(ICCV25)}} & 56.9 & 61.6 & 53.5 & 1701 & 83.6 & 66.5 & 74.0 & 56.5 & 52.6  & 91.1\% \\
        
        HiRED \texttt{\scriptsize{(AAAI25)}} & 59.3 & 64.2 & 55.9 & 1690 & 83.3 & 66.7 & 75.7 & {58.8} & 54.2  & 93.3\% \\

       SparseVLM \texttt{\scriptsize{(ICML25)}} & 56.1 & 60.6 & 54.5 & 1533 & 82.4 & 66.1 & 71.5 & 58.4 & 52.0  & 89.7\%  \\
       
       GlobalCom$^2$ \texttt{\scriptsize{(2025.3)}} & 57.1 & 61.8 & 53.4 & 1698 & 83.8 & 67.4 & 76.7 & 57.2 & 54.6 & 92.2\%  \\

       DART \texttt{\scriptsize{(EMNLP25)}} & 61.7 & 65.3 & \textcolor{MidnightBlue}{\textbf{58.2}} & 1710 & \textcolor{MidnightBlue}{\textbf{84.1}} & 68.4 & 79.1 & 58.7 & \textcolor{MidnightBlue}{\textbf{56.1}} & 93.9\%  \\

       \rowcolor{mygreen2} HoloV \scriptsize{(Ours)}& \textcolor{MidnightBlue}{\textbf{61.7}} & \textcolor{MidnightBlue}{\textbf{65.3}} & 57.5 & \textcolor{MidnightBlue}{\textbf{1738}} & 83.9 & \textcolor{MidnightBlue}{\textbf{68.9}} & \textcolor{MidnightBlue}{\textbf{79.5}} & {58.7} & 55.3 & \textcolor{MidnightBlue}{\textbf{95.6\%}}  \\
	\end{tabular} 
    }\vspace{-1.em}
\end{table}
\begin{table}[t]
    \centering
    \setlength{\tabcolsep}{3.pt}
    \footnotesize
    \caption{Real inference comparison on POPE. Experiments adopt 66.7\% and 90\% pruning ratios.}
    \label{tab:efficiency}
    \vspace{0.25em}
    \resizebox{\linewidth}{!}{
    \begin{tabular}{l | *{5}{>{\centering\arraybackslash}p{0.9cm}} | *{5}{>{\centering\arraybackslash}p{0.9cm}}}
        \textbf{\;Methods} & \textbf{Time} & \textbf{Prefill} & \textbf{Latency} & \textbf{Mem.} & \textbf{Acc.} & \textbf{Time} & \textbf{Prefill} & \textbf{Latency} & \textbf{Mem.}  & {\textbf{Acc.}}\\
        \midrule
        
         \textcolor{gray}{Upper Bound, 576 Tokens} & \textcolor{gray}{49:41} & \textcolor{gray}{0.5ms} & \textcolor{gray}{0.334s} & \textcolor{gray}{19.0G} & \textcolor{gray}{100.\%} & \textcolor{gray}{49:41} & \textcolor{gray}{0.5ms} & \textcolor{gray}{0.334s} & \textcolor{gray}{19.0G} & \textcolor{gray}{100.\%} \\
          \midrule
          
       \rowcolor{mygray}
        LLaVA-1.5-7B & \multicolumn{5}{c}{\textit{Retain 192 Tokens} \ $\fg{(\downarrow 66.7\%)}$} & \multicolumn{5}{c}{\textit{Retain 58 Tokens} \ $\fg{(\downarrow 90\%)}$}\\

        FastV \texttt{\scriptsize{(ECCV24)}} & 35:34 & 0.5ms & 0.239s & 16.0G & 75.4\% & 30:41 & 0.5ms & 0.206s & 15.6G  & 66.8\% \\
        
        MustDrop \texttt{\scriptsize{(2024.11)}} & 32:30 & 0.5ms & 0.273s & 15.6G & 96.2\% & 29:40 & 0.6ms & 0.199s & 14.5G  & 87.1\%    \\

       FasterVLM \texttt{\scriptsize{(ICCV25)}} & \textcolor{MidnightBlue}{\textbf{30:09}} & 0.5ms & \textcolor{MidnightBlue}{\textbf{0.202s}} & 15.6G & 100.\% & 25:08 & 0.5ms & \textcolor{MidnightBlue}{\textbf{0.168s}} & 14.5G  & 92.5\% \\
        
        HiRED \texttt{\scriptsize{(AAAI25)}} & 30:08 & 0.6ms & 0.210s & 15.7G & 96.4\% & \textcolor{MidnightBlue}{\textbf{25:03}} & 0.6ms & 0.168s & 14.5G  & 92.7\% \\

       SparseVLM \texttt{\scriptsize{(ICML25)}} & 40:51 & 0.6ms & 0.251s & 15.8G & 97.3\% & 31:28 & 0.6ms & 0.212s & 14.6G  & 92.3\%  \\
       \rowcolor{mygreen2} HoloV \scriptsize{(Ours)}& 31:02 & 0.5ms & 0.208s & \textcolor{MidnightBlue}{\textbf{15.6G}} & \textcolor{MidnightBlue}{\textbf{99.7\%}} & 27:36 & 0.5ms & 0.176s & \textcolor{MidnightBlue}{\textbf{14.5G}} & \textcolor{MidnightBlue}{\textbf{95.7\%}}  \\
	\end{tabular}
    }\vspace{-1em}
\end{table}
To assess the efficiency of HoloV, we compare total inference time, prefill time, end-to-end latency, GPU memory usage, and accuracy on LLaVA-1.5-7B. As shown in Tab.~\ref{tab:efficiency}, under a 90\% pruning ratio, HoloV achieves a 42.7\% reduction in inference time and a 42.8\% decrease in latency, with only a 4.3\% drop in accuracy, similarly under 66.7\% pruning ratio.
Compared to FastV and SparseVLM, HoloV uses less memory and runs faster. Although FasterVLM offers slightly quicker inference, HoloV improves accuracy by 3.0\%, demonstrating a better balance between efficiency and performance.

\subsection{Ablation Analysis of Crop Numbers}\renewcommand{\multirowsetup}{\centering}
\definecolor{mygray}{gray}{.92}
\definecolor{mygreen1}{RGB}{253, 244, 244}
\definecolor{mygreen2}{RGB}{247, 247, 252}
\definecolor{ForestGreen}{RGB}{34,139,34}
\definecolor{Forestred}{RGB}{220,50,50}

\begin{wraptable}{r}{0.45\textwidth}
    \centering
    \vspace{-1.5em}
    \setlength{\tabcolsep}{3.5pt}
    \footnotesize
    \caption{Ablation of different crop numbers.}
    \vspace{-0.5em}
    \label{tab:crops}
    \resizebox{\linewidth}{!}{
    \begin{tabular}{l | *{4}{>{\centering\arraybackslash}p{0.92cm}} }
        \textbf{\;Methods}  & \textbf{\#\,=\,4}  & \textbf{\#\,=\,8} & \textbf{\#\,=\,12} & \textbf{\#\,=\,16} \\
        \midrule
        
        \textcolor{gray}{Upper Bound} & \textcolor{gray}{100\%} & \textcolor{gray}{100\%} & \textcolor{gray}{100\%} & \textcolor{gray}{100\%} \\
        \midrule

        \rowcolor{mygray}
        LLaVA-1.5-7B & \multicolumn{4}{c}{\textit{Token Pruning Rate = 66.7\%}}\\
        
        HoloV \scriptsize{(Ours)} & {95.1\%} & \textcolor{MidnightBlue}{\textbf{96.7\%}} & {96.1\%} & {94.9\%}  \\
        \midrule

        \rowcolor{mygray}
        LLaVA-1.5-7B & \multicolumn{4}{c}{\textit{Token Pruning Rate = 77.8\%}}\\
         HoloV \scriptsize{(Ours)} & {94.5\%} & \textcolor{MidnightBlue}{\textbf{95.1\%}} & {94.6\%} & {94.8\%}\\
        \midrule

        \rowcolor{mygray}
        LLaVA-1.5-7B & \multicolumn{4}{c}{\textit{Token Pruning Rate = 88.9\%}}\\
        
        HoloV \scriptsize{(Ours)} & {89.3\%} & {89.3\%} & {90.0\%} & \textcolor{MidnightBlue}{\textbf{91.2\%}}  \\
	\end{tabular}
    }
    \vspace{-1.25em}
\end{wraptable}
Partition granularity does not affect pruning efficiency: retained visual tokens are determined by pruning quotas, and the quota per crop, \textit{i.e.}, calculated dynamically via intra-crop visual token informativeness, leaves total pruning quotas unchanged. For high-resolution images, dynamic crop number adjustment is beneficial: using fewer crops for high-detail areas and more for low-detail regions.
Specifically, Table \ref{tab:crops} shows results when total crops vary from 4 to 16, where the values represent percentages relative to original performance. We observe no significant performance impact from varying crop numbers.

\subsection{Visualization Analysis}
Further, we visualize retained visual patches under different pruning rates. As shown in Fig. \ref{fig:cases}, black areas indicate discarded tokens, while colored regions show key semantic areas aligned with text.
Compared to FastV, HoloV preserves more relevant visual cues even under high pruning (e.g., 87.5\%), effectively filtering out redundant visual tokens while keeping critical objects. This supports better cross-modal alignment, allowing pivotal holistic tokens for visual overall understanding.

\begin{figure}
    \centering 
    \includegraphics[width=1\linewidth]{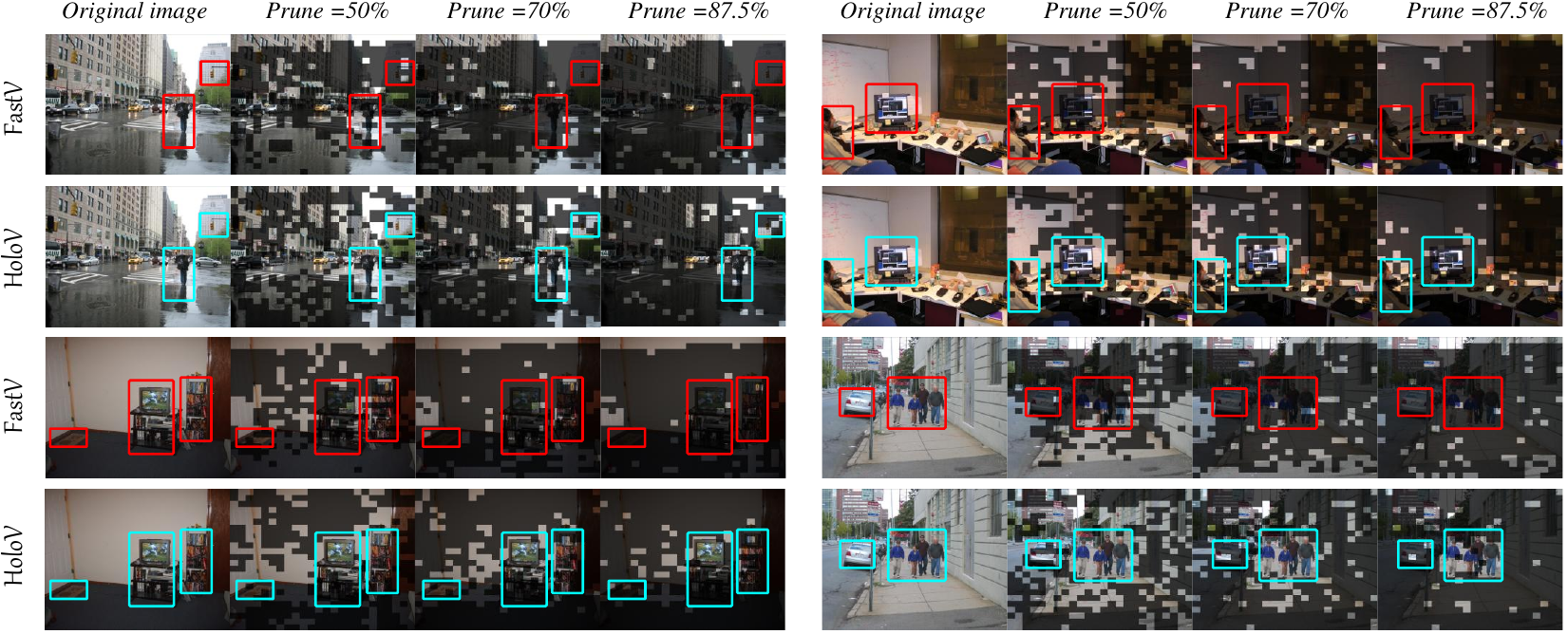}
    \caption{The case comparison between FastV and HoloV from the GQA. It presents original images alongside their pruned versions at pruning rates of 50\%, 70\%, and 87.5\%. The bounding boxes highlight specific regions and objects across images, where HoloV well preserves the pivotal tokens.} \vspace{-1.5em}
    \label{fig:cases}
\end{figure}

\subsection{HoloV with Qwen Architecture}

\renewcommand{\multirowsetup}{\centering}
\definecolor{mygray}{gray}{.92}
\definecolor{mygreen1}{RGB}{253, 244, 244}
\definecolor{mygreen2}{RGB}{247, 247, 252}
\definecolor{ForestGreen}{RGB}{34,139,34}
\definecolor{Forestred}{RGB}{220,50,50}

\begin{wraptable}{r}{0.58\textwidth}
    \centering
    \vspace{-1.5em}
    \setlength{\tabcolsep}{3.5pt}
    \footnotesize
    \caption{Comparative Experiments on Qwen2.5-VL-7B.}
    \vspace{-0.5em}
    \label{tab:Qwen}
    \resizebox{\linewidth}{!}{
    \begin{tabular}{l | *{5}{>{\centering\arraybackslash}p{0.92cm}} |>{\centering\arraybackslash}p{1.15cm}}
        \textbf{\;Methods}  & \textbf{MMB}  & \textbf{MME} & \textbf{POPE} & \textbf{SQA} & \textbf{VQA}$_{\text{Text}}$ & \makecell[c]{\textbf{Avg.}}\\
        \midrule
        
        \textcolor{gray}{Upper Bound} & \textcolor{gray}{82.8} & \textcolor{gray}{2304} & \textcolor{gray}{86.1} & \textcolor{gray}{84.7} & \textcolor{gray}{84.8}  & \multirow{1}*{\textcolor{gray}{100\%}} \\
        \midrule

        \rowcolor{mygray}
        Qwen2.5-VL-7B & \multicolumn{6}{c}{\textit{Token Pruning Rate = 66.7\%}}\\
        FastV \texttt{\scriptsize{(ECCV24)}} & 75.7 & 2072 & 82.2 & 78.5 & 77.9  & \multirow{1}*{92.3\%} \\
        \rowcolor{mygreen2}
        HoloV \scriptsize{(Ours)} & \textcolor{MidnightBlue}{\textbf{78.3}} & \textcolor{MidnightBlue}{\textbf{2093}} & \textcolor{MidnightBlue}{\textbf{85.0}} & \textcolor{MidnightBlue}{\textbf{79.8}} & \textcolor{MidnightBlue}{\textbf{78.9}}   & \textcolor{MidnightBlue}{\textbf{94.6\%}} \\
        \midrule

        \rowcolor{mygray}
        Qwen2.5-VL-7B & \multicolumn{6}{c}{\textit{Token Pruning Rate = 77.8\%}}\\
        FastV \texttt{\scriptsize{(ECCV24)}} & 74.9& 2036 & 80.7 & 78.0 & 69.0 & \multirow{1}*{89.2\%}\\
        \rowcolor{mygreen2} HoloV \scriptsize{(Ours)} & \textcolor{MidnightBlue}{\textbf{76.5}} & \textcolor{MidnightBlue}{\textbf{2043}} & \textcolor{MidnightBlue}{\textbf{82.3}} & \textcolor{MidnightBlue}{\textbf{79.8}} & \textcolor{MidnightBlue}{\textbf{70.3}} & \textcolor{MidnightBlue}{\textbf{92.7\%}} \\
        \midrule

        \rowcolor{mygray}
        Qwen2.5-VL-7B & \multicolumn{6}{c}{\textit{Token Pruning Rate = 88.9\%}}\\
        FastV \texttt{\scriptsize{(ECCV24)}} & 69.2 & 1940 & 78.6 & 77.4 & 60.3 & 84.3\% \\
        \rowcolor{mygreen2} HoloV \scriptsize{(Ours)} & \textcolor{MidnightBlue}{\textbf{72.4}} & \textcolor{MidnightBlue}{\textbf{2006}} & \textcolor{MidnightBlue}{\textbf{80.7}} & \textcolor{MidnightBlue}{\textbf{79.5}} & \textcolor{MidnightBlue}{\textbf{61.8}} & \textcolor{MidnightBlue}{\textbf{90.5\%}} \\
	\end{tabular}
    }\vspace{-0.25em}
\end{wraptable}
To verify the architectural generalization of HoloV beyond LLaVA-based models, we conduct experiments on the Qwen2.5-VL-7B \citep{Qwen2.5-VL} architecture. As shown in Tab.~\ref{tab:Qwen}, HoloV demonstrates strong generalization capability across this architecture, consistently outperforming the text-visual attention-based FastV at various reduction ratios, highlighting its robustness and adaptability to different model designs. Notably, it achieves average performance retention rates of 94.6\%, 92.7\%, and 90.5\% at 66.7\%, 77.8\%, and 88.9\% token pruning rates respectively, significantly higher than FastV's 92.3\%, 89.2\%, and 84.3\% performance. These results show that our proposed holistic pruning strategy effectively generalizes across different MLLM architectures.
\section{Conclusion}\label{sec:conclusion}
We present HoloV, a holistic token pruning framework that addresses two critical limitations of attention-based visual compression: 1) semantic fragmentation from over-pruning non-salient regions, and 2) static importance estimation ignoring token interdependencies. The core innovation lies in variance-modulated dynamic scoring and capacity-constrained allocation, which preserve holistic context. Extensive experiments validate our method's effectiveness in maintaining both perceptual details and abstract spatial reasoning capabilities under aggressive token reduction.

\begin{ack}
This work was supported by the National Natural Science Foundation of China (Grant No.62506318); Guangdong Provincial Department of Education Project (Grant No.2024KQNCX028); CAAI-Ant Group Research Fund; Scientific Research Projects for the Higher-educational Institutions (Grant No.2024312096), Education Bureau of Guangzhou Municipality; Guangzhou-HKUST(GZ) Joint Funding Program (Grant No.2025A03J3957), Education Bureau of Guangzhou Municipality.
\end{ack}

\clearpage
{\small
\bibliographystyle{plain}
\bibliography{reference}
}

\newpage
\appendix
\hypersetup{linkcolor=black}
\etocdepthtag.toc{mtappendix}
\etocsettagdepth{mtchapter}{none}
\etocsettagdepth{mtappendix}{section}
\etocsettagdepth{mtappendix}{subsubsection}
\tableofcontents
\clearpage

{\large\textbf{$\infty$ Technical Appendices and Supplements}}

In this appendix, we first provide the details of the experimental setup, including information about the datasets, model architectures, and comparison methods. Then, we offer a more detailed computational complexity and theoretical analysis, along with more visualizations and insights. 

\section{Detailed Experiment Settings}
\label{apx:setting}
\subsection{Benchmarks and Metrics}
\label{apx:dataset}
We conducted experiments on several widely used visual understanding benchmarks.
For image understanding task, we performed experiments on ten widely used benchmarks, including GQA \citep{hudson2019gqa}, MMBench (MMB) and MMB-CN \citep{liu2025mmbench}, MME \citep{fu2023mme}, POPE~\citep{li2023evaluating}, VizWiz \citep{bigham2010vizwiz}, SQA (ScienceQA) \citep{lu2022learn}, VQA$_{\text{V2}}$ (VQA V2) \citep{goyal2017making}, VQA$_{\text{Text}}$ (TextVQA) \citep{singh2019towards}, and MMVet ~\citep{2024MMVet}.

\textbf{GQA}~\citep{hudson2019gqa} The GQA benchmark is composed of three main components: scene graphs, questions, and images. The image section encompasses not only the images themselves but also their spatial features and the attributes of all objects within the images. The questions in GQA are specifically crafted to assess the model's ability to comprehend visual scenes and engage in reasoning about different aspects of the images.

\textbf{MMBench}~\cite{liu2025mmbench}. MMBench provides a comprehensive evaluation of a model's performance across multiple dimensions. It is structured into three levels of ability dimensions. The first level (L-1) focuses on two core abilities: perception and reasoning. Building on this foundation, the second level (L-2) includes six sub-abilities, further elaborating the model’s capabilities. At the third level (L-3), the evaluation becomes more granular, encompassing 20 specific ability dimensions, thus ensuring a detailed and multi-faceted analysis of the model's performance.

\textbf{MME}~\cite{fu2023mme}. The MME benchmark is another holistic evaluation framework, designed to thoroughly assess various facets of a model's performance. It includes 14 distinct subtasks, each targeting specific perceptual and cognitive abilities of the model. By employing carefully crafted instruction-answer pairs and maintaining concise instruction designs, the benchmark minimizes issues such as data leakage and unfair evaluation, ensuring a fair and reliable performance assessment.

\textbf{POPE}~\cite{li2023evaluating}. POPE focuses on evaluating the degree of Object Hallucination in models. It reformulates hallucination evaluation by prompting the model with specific binary questions regarding the presence of objects in images. Key metrics such as Accuracy, Recall, Precision, and F1 Score are utilized to measure the hallucination level across three different sampling strategies, providing a robust and precise evaluation of the model's object detection and hallucination behavior.

\textbf{ScienceQA}~\cite{lu2022learn}. ScienceQA spans many domains, including natural sciences, language sciences, and social sciences. Questions are categorized within each domain according to topics, categories, and skills, which results in 26 topics, 127 categories, and 379 skills. This hierarchical categorization facilitates a thorough and diverse range of scientific questions, enabling an in-depth evaluation of the model's multimodal understanding, multi-step reasoning abilities, and interpretability.

\textbf{VQA-V2}~\cite{goyal2017making}. VQA-V2 is designed to evaluate a model’s visual perception capabilities through open-ended questions. It consists of 265,016 images representing a wide variety of real-world scenes and objects, providing rich visual contexts for the associated questions. Each question is accompanied by 10 ground truth answers provided by human annotators, enabling a comprehensive evaluation of the model's ability to answer questions accurately and effectively.

\textbf{TextVQA}~\cite{singh2019towards}. TextVQA focuses on the integration of text within images, evaluating the model’s ability to comprehend and reason about both the visual and textual information present. The benchmark includes a series of visual question-answering tasks where the model must not only interpret the visual content but also read and understand the embedded text in order to respond correctly.

\textbf{MMVet}~\cite{2024MMVet}. MMVet is designed to assess a model's ability to solve complex tasks by leveraging various core vision-language capabilities. It defines six core vision-language capabilities and examines 16 distinct integrations of these capabilities. This allows for a nuanced evaluation of how well models integrate and utilize multiple vision-language abilities to solve tasks.

\textbf{MSVD-QA}~\cite{xu2017video}. The MSVD-QA benchmark is derived from the Microsoft Research Video Description (MSVD) dataset and consists of 1970 video clips paired with approximately 50.5K question-answer pairs. The questions span a wide range of topics and aspects related to the video content, making it suitable for video question-answering and video captioning tasks. The questions fall into five categories: what, who, how, when, and where, providing a comprehensive set of queries for model evaluation.

\textbf{MSRVTT-QA}~\cite{xu2017video}. MSRVTT-QA includes 10,000 video clips and 243,000 question-answer pairs. One of its primary challenges lies in understanding and reasoning about video content, which involves both visual and temporal aspects. To answer questions accurately, models must effectively integrate and process these components. Similar to MSVD-QA, the tasks in MSRVTT-QA are categorized into five question types: what, who, how, when, and where, allowing for detailed performance evaluation across multiple dimensions.



\subsection{Backbones and Baselines}
\label{apx:baseline}
\textbf{Models}.
We evaluate HoloV using various open-source MLLMs. For image understanding tasks, experiments are conducted on the LLaVA family, including LLaVA-1.5\footnote{\url{https://huggingface.co/liuhaotian/llava-v1.5-7b}}~\citep{liu2024llava} and LLaVA-NeXT\footnote{\url{https://huggingface.co/liuhaotian/llava-v1.6-vicuna-7b}}~\citep{liu2024llavanext}, with the latter used to validate performance on high-resolution images.
For video understanding tasks, we use Video-LLaVA~\citep{lin2023video} as the baseline model.
Following the settings reported in their paper.

We analyze multiple representative methods for accelerating MLLM inference through visual token pruning. These methods share the goal of improving efficiency by reducing redundant visual tokens.

\textbf{ToMe}~\citep{bolya2022tome} merges similar tokens in visual transformer layers through lightweight matching techniques, achieving acceleration without requiring additional training.

\textbf{LLaVA-PruMerge}~\citep{shang2024llava} combines pruning and merging strategies by dynamically removing less important tokens using sparse \texttt{CLS}-visual attention and clustering retained tokens based on key similarity.

\textbf{FastV}~\citep{chen2024image} focuses on early-stage token pruning by leveraging attention maps, effectively reducing computational overhead in the initial layers.

\textbf{HiRED}~\citep{arif2024hired} allocates token budgets across image partitions based on \texttt{CLS} token attention, followed by the selection of the most informative tokens within each partition, ensuring spatially aware token reduction.

\textbf{PDrop}~\citep{xing2024PyramidDrop} adopts a progressive token-dropping strategy across model stages, forming a pyramid-like token structure that balances efficiency and performance.

\textbf{FasterVLM}~\citep{zhang2024cls} evaluates token importance via \texttt{CLS} attention in the encoder and performs pruning before interaction with the language model, streamlining the overall process.

\textbf{MustDrop}~\citep{liu2024multi} integrates multiple strategies, including spatial merging, text-guided pruning, and output-aware cache policies, to reduce tokens across various stages.

\textbf{GlobalCom$^2$}~\citep{liu2025compression} introduces a hierarchical approach by coordinating thumbnail tokens to allocate retention ratios for high-resolution crops while preserving local details.

\textbf{SparseVLM}~\citep{zhang2024sparsevlm} ranks token importance using cross-modal attention and introduces adaptive sparsity ratios, complemented by a novel token recycling mechanism.

\textbf{VisionZip}~\citep{yang2024visionzip} evaluates token importance via attention in the encoder and clustering retained tokens based on key similarity.

\textbf{DART}~\citep{wen2025stop} introduces a duplication-aware token reduction method that selects a small subset of pivot tokens, calculates cosine similarity between pivot tokens and remaining tokens, retains those with the lowest duplication to pivots, achieving significant acceleration while maintaining performance and good compatibility with efficient attention operators.
These methods collectively highlight diverse approaches to token reduction, ranging from attention-based pruning to adaptive merging, offering complementary solutions for accelerating MLLMs.

\subsection{Reproducibility}\label{app:Implementation_details}
\textbf{Implementaion Details}.
All of our experiments are conducted on Nvidia A800-80G GPU. The implementation was carried out in Python 3.10, utilizing PyTorch 2.1.2, and CUDA 11.8. All baseline settings follow the original paper. 
We set $num_{crop} = [1024 / N]$, where $N$ denotes the number of retained visual tokens, thus the smaller the quota, the more crops there will be for visual holistic context retention.

\section{More Sparsification Visualization}
We conduct a detailed visualization of retained visual patches across varying pruning rates to illustrate the effectiveness of HoloV. As depicted in Fig. \ref{fig:apxcase1}, \ref{fig:apxcase2}, \ref{fig:apxcase3}, the black regions represent discarded visual tokens, whereas the colored areas highlight key semantic zones that align with textual descriptions, demonstrating how HoloV strategically preserves informative content. Compared to FastV, a representative attention-based pruning method, HoloV exhibits superior capability in retaining relevant visual cues even at extremely high pruning ratios, such as 87.5\%. This is achieved through its holistic pruning strategy, which prioritizes spatial-semantic diversity over isolated attention scores. By dynamically allocating pruning budgets across different image crops, HoloV effectively filters out redundant tokens while safeguarding critical objects and their contextual relationships. For instance, in complex scenes with multiple interacting elements, HoloV ensures that tokens corresponding to both focal objects and their surrounding environmental cues are preserved, whereas FastV tends to over-concentrate on high-attention regions, leading to loss of contextual coherence. This enhanced preservation of visual holistic understanding facilitates more accurate cross-modal alignment between visual features and language tokens, enabling MLLMs to maintain robust semantic reasoning capabilities even under aggressive token reduction. The visualization not only validates the superiority of HoloV’s design philosophy but also provides empirical evidence of its ability to balance efficiency and semantic integrity in visual token pruning.
\begin{figure}
    \centering
    \includegraphics[width=1\linewidth]{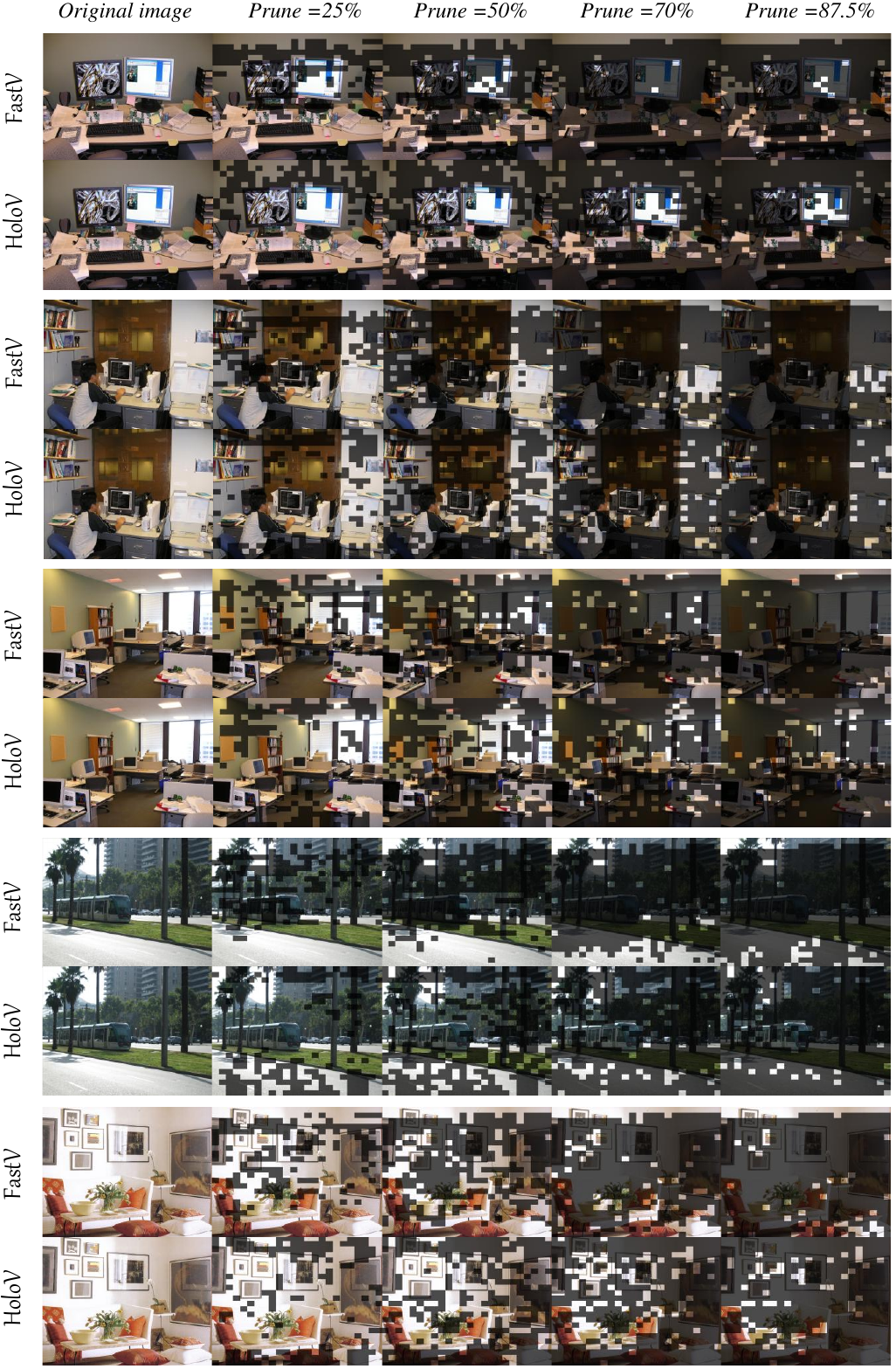}
    \caption{The case comparison between FastV and HoloV from the GQA. It presents original images alongside their pruned versions at pruning rates of 25\%, 50\%, 70\%, and 87.5\%. The bounding boxes highlight specific regions and objects across images, where HoloV well preserves the pivotal tokens.}
    \label{fig:apxcase1}
\end{figure}

\begin{figure}
    \centering
    \includegraphics[width=1\linewidth]{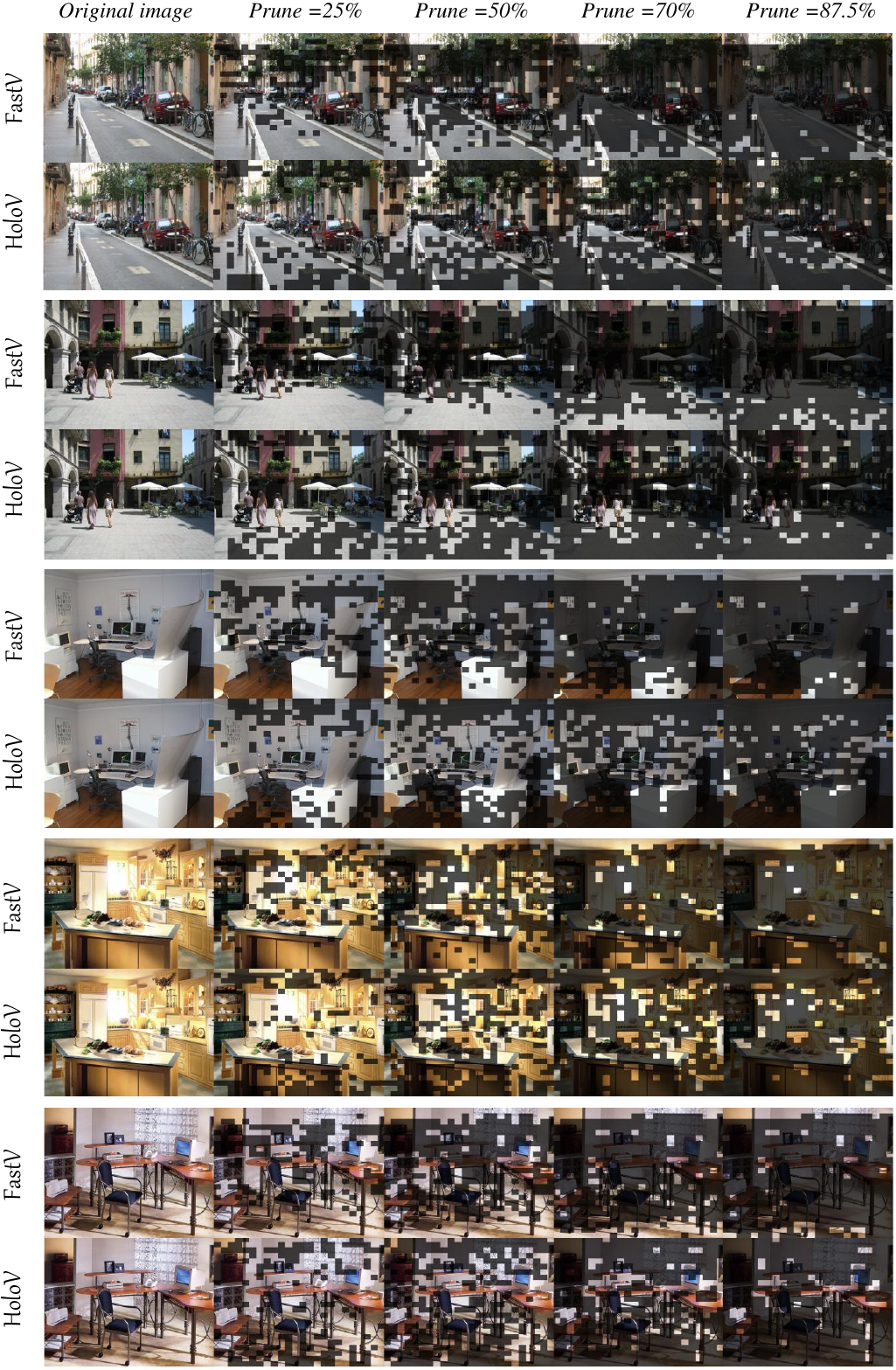}
    \caption{The case comparison between FastV and HoloV from the GQA. It presents original images alongside their pruned versions at pruning rates of 25\%, 50\%, 70\%, and 87.5\%. The bounding boxes highlight specific regions and objects across images, where HoloV well preserves the pivotal tokens.}
    \label{fig:apxcase2}
\end{figure}

\begin{figure}
    \centering
    \includegraphics[width=1\linewidth]{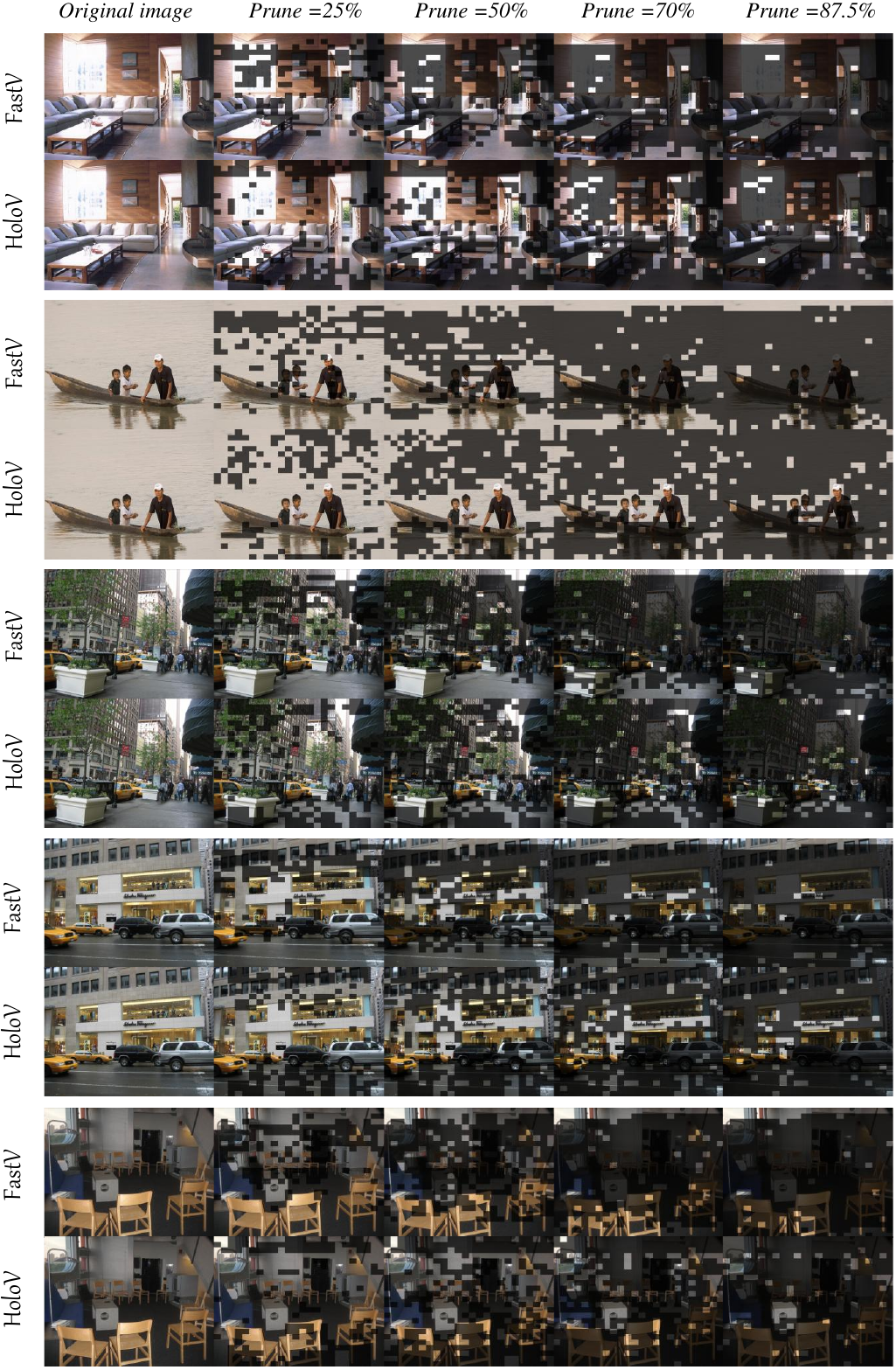}
    \caption{The case comparison between FastV and HoloV from the GQA. It presents original images alongside their pruned versions at pruning rates of 25\%, 50\%, 70\%, and 87.5\%. The bounding boxes highlight specific regions and objects across images, where HoloV well preserves the pivotal tokens.}
    \label{fig:apxcase3}
\end{figure}

\subsection{MMBench Finegrained Results}

\begin{table}[t]
\caption{Fine-grained comparison MMBench \cite{liu2025mmbench} between FastV and HoloV at high pruning ratios.} \vspace{0.25em}
\label{tab:finegrained_mmbench}
\resizebox{\linewidth}{!}{
\centering
  \begin{tabular}{l*{5}{>{\centering\arraybackslash}p{1.8cm}}}
    \toprule[1.25pt]
    \textbf{Category (dev)}   & \textcolor{gray}{{{\begin{tabular}[c]{@{}c@{}}Vanilla\\ (576 Tokens) \end{tabular}}}}  & {{\begin{tabular}[c]{@{}c@{}}FastV $\fg{\downarrow 90\%}$\\ (58 Tokens) \end{tabular}}}   & {{\begin{tabular}[c]{@{}c@{}}HoloV $\fg{\downarrow 90\%}$\\ (58 Tokens) \end{tabular}}}  & {{\begin{tabular}[c]{@{}c@{}}FastV $\fg{\downarrow 75\%}$\\ (144 Tokens) \end{tabular}}}  & {{\begin{tabular}[c]{@{}c@{}}HoloV $\fg{\downarrow 75\%}$\\ (144 Tokens) \end{tabular}}}  \\ 
    \midrule
    Action Recognition     & \textcolor{gray}{90.7} & 85.2 & 85.3 & 87.0 & 89.7\\
    Attribute Comparison   & \textcolor{gray}{50.0} & 50.0 & 53.9 & 52.3 & 48.7 \\
    Attribute Recognition  & \textcolor{gray}{79.7} & 68.9 & 71.7 & 77.0 & 79.7 \\
    Celebrity Recognition  & \textcolor{gray}{79.8} & 76.8 & 74.7 & 78.8 & 78.8 \\
    Function Reasoning     & \textcolor{gray}{75.9} & 72.2 & 83.9 & 75.9 & 83.9 \\
    Future Prediction      & \textcolor{gray}{45.0} & 30.0 & 58.3 & 40.0 & 58.3 \\
    Identity Reasoning     & \textcolor{gray}{93.3} & 86.7 & 97.5 & 95.6 & 97.7 \\
    Image Emotion          & \textcolor{gray}{78.0} & 58.0 & 68.7 & 78.0 & 76.0 \\
    Image Quality          & \textcolor{gray}{35.8} & 22.6 & 38.8 & 28.3 & 40.1 \\
    Image Scene            & \textcolor{gray}{96.2} & 90.4 & 91.5 & 96.2 & 97.1 \\
    Image Style            & \textcolor{gray}{77.4} & 73.6 & 71.7 & 77.4 & 77.4 \\
    Image Topic            & \textcolor{gray}{83.3} & 80.6 & 92.9 & 83.3 & 83.3 \\
    Nature Relation        & \textcolor{gray}{41.7} & 39.6 & 49.4 & 37.5 & 37.5 \\
    Object Localization    & \textcolor{gray}{39.5} & 35.8 & 23.3 & 37.0 & 38.3 \\
    OCR                    & \textcolor{gray}{59.0} & 59.0 & 81.8 & 59.0 & 84.4 \\
    Physical Property Reasoning  & \textcolor{gray}{50.7} & 60.3 & 49.3 & 53.3 & 58.0 \\
    Physical Relation      & \textcolor{gray}{33.3} & 41.7 & 32.7 & 41.7 & 41.7 \\
    Social Relation        & \textcolor{gray}{88.4} & 53.5 & 75.8 & 72.1 & 75.7 \\
    Spatial Relationship   & \textcolor{gray}{17.8} & 17.8 & 18.5 & 17.8 & 18.5 \\
    Structured Image-Text Understanding  & \textcolor{gray}{26.9} & 30.8 & 21.8 & 28.2 & 21.9 \\ 
    \bottomrule[1.25pt]
  \end{tabular}}
\end{table}
As shown in Table \ref{tab:finegrained_mmbench}, in the MMBench \cite{liu2025mmbench} fine-grained comparison between FastV \cite{chen2024image} and HoloV at 90\% and 75\% pruning ratios, significant performance improvements are evident with HoloV in several categories. Specifically, HoloV shows enhanced outcomes in Action Recognition, Attribute Recognition, Future Prediction, Identity Reasoning, Image Emotion, Image Quality, and Image Scene. These results underline HoloV's ability to retain crucial visual information for complex understanding and response capabilities within dynamic environments.


\section{Theoretical Analysis of HoloV}
\label{apx:analysis}
To further justify the trustworthiness of our proposed HoloV, we provide a theoretical analysis of it.

\begin{assumption}[Contextual Stability]
\label{assump:context}  
Let $\mathcal{X}_v$ be the original visual tokens set, and $\mathcal{R}_v \subseteq \mathcal{X}_v$ the retained visual tokens subset, We assume the following:  

\textbf{(C1)}. For any pruned visual token $x_j \in \mathcal{X}_v \setminus \mathcal{R}_v$, there exists $x_i \in \mathcal{R}_v$ with:  
    $$d(x_i, x_j) \geq \epsilon\;\;\text{and}\;\;\mathbb{V}\mathrm{ar}(d(x_i, \mathcal{N}(x_j))) \leq \delta\;,$$ 
    where $d$ means distance function like cosine similarity, $\mathcal{N}(x_j)$ denotes $x_j$'s local context neighbors.  
    
\textbf{(C2)}. For $\mathcal{H}(x_i) =  \gamma \mathcal{V}(x_i) + \mathcal{A}(x_i) $ satisfies $\mathcal{H}(x_i) \geq \gamma$ for all retained tokens $x_i \in \mathcal{R}_v$ 
    
\end{assumption}  

\begin{lemma}[Token Coverage Guarantee]
\label{lemma:coverage}  
Under \textbf{(A1)}, for any pruned token $x_j$, there exists $x_i \in \mathcal{R}$ such that:  
$$\|x_i - x_j\| \leq \sqrt{2(1 - \epsilon)}\|x_j\| + \sqrt{\delta}$$  
\end{lemma}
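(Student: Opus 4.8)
The plan is to convert the cosine-similarity conditions of Assumption~\ref{assump:context}(C1) (the lemma's ``(A1)'') into a Euclidean bound through the polarization identity, and then split the resulting expression into the two stated terms by subadditivity of the square root. First I would regroup $\|x_i\|^2 + \|x_j\|^2 - 2\langle x_i,x_j\rangle$ using $\langle x_i,x_j\rangle = \|x_i\|\|x_j\|\,d(x_i,x_j)$ with $d$ the cosine similarity, obtaining
\begin{equation}
\|x_i - x_j\|^2 = \big(\|x_i\| - \|x_j\|\big)^2 + 2\,\|x_i\|\,\|x_j\|\,\big(1 - d(x_i,x_j)\big).
\end{equation}
This isolates a \emph{radial} discrepancy (the norm-difference term) from an \emph{angular} discrepancy (the $1-d$ term), which is precisely the split that will produce $\sqrt{\delta}$ and $\sqrt{2(1-\epsilon)}\|x_j\|$ respectively.

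Next I would bound the angular term directly from the first part of (C1): since $d(x_i,x_j)\geq\epsilon$, we have $1-d(x_i,x_j)\leq 1-\epsilon$, and under the paper's normalization of the crop embeddings (so $\|x_i\|\|x_j\|\leq\|x_j\|^2$, or simply $\|x_i\|\approx\|x_j\|$) the angular term is at most $2(1-\epsilon)\|x_j\|^2$. For the radial term I would invoke the variance condition $\mathbb{V}\mathrm{ar}(d(x_i,\mathcal{N}(x_j)))\leq\delta$: the bounded spread of similarities between the retained token $x_i$ and the local context $\mathcal{N}(x_j)$ controls how far $x_j$ can sit from the representative direction of its own neighborhood, which I would quantify through the neighborhood centroid $\bar{n}_j$ and reduce to $(\|x_i\|-\|x_j\|)^2\leq\delta$. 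Combining the two pieces and applying $\sqrt{p+q}\leq\sqrt{p}+\sqrt{q}$ then yields
\begin{equation}
\|x_i-x_j\| \leq \sqrt{\big(\|x_i\|-\|x_j\|\big)^2} + \sqrt{2\|x_i\|\|x_j\|(1-\epsilon)} \leq \sqrt{\delta} + \sqrt{2(1-\epsilon)}\,\|x_j\|,
\end{equation}
which is the claim. The existence of the witness $x_i\in\mathcal{R}$ is supplied verbatim by (C1), so no separate construction is needed.

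The step I expect to be the main obstacle is making the variance-to-distance conversion rigorous, i.e.\ turning the bound on $\mathbb{V}\mathrm{ar}(d(x_i,\mathcal{N}(x_j)))$ into the clean additive $\sqrt{\delta}$ term. The variance lives on the scalar similarity profile over the neighborhood, whereas the target is a spatial deviation, so the conversion requires a Jensen/root-variance argument together with an explicit normalization convention --- without fixing whether the embeddings are unit-norm or merely comparable-norm, the $\|x_j\|$ factor in the first term and the additive $\sqrt{\delta}$ are not simultaneously forced. A secondary point worth flagging is the reversed convention of (C1): a similarity \emph{lower} bound $d\geq\epsilon$ must be shown to give a distance \emph{upper} bound, which is why $(1-\epsilon)$ rather than $\epsilon$ appears under the root, and I would state this sign direction explicitly before invoking the polarization step.
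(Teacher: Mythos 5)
Your proposal takes a genuinely different algebraic route than the paper, and in fact one that matches the lemma's statement more faithfully. The paper's proof expands $\|x_i - x_j\|^2 = \|x_i\|^2 + \|x_j\|^2 - 2x_i^\top x_j$ directly, invokes a uniform norm bound $B$ to write $\|x_i - x_j\|^2 \leq 2(1-\epsilon)B^2 + \sqrt{\delta}$, and stops there: it never takes the square root, it silently replaces $\|x_j\|$ by $B$, and the $\sqrt{\delta}$ term is simply inserted into the squared bound without derivation (note that taking roots of the paper's bound would yield a $\delta^{1/4}$ term, not the $\sqrt{\delta}$ claimed in the lemma). Your radial/angular decomposition
\begin{equation*}
\|x_i - x_j\|^2 = \bigl(\|x_i\| - \|x_j\|\bigr)^2 + 2\,\|x_i\|\,\|x_j\|\,\bigl(1 - d(x_i,x_j)\bigr)
\end{equation*}
is an exact identity, and routing $(1-\epsilon)$ through the angular part and $\delta$ through the radial part, followed by subadditivity of the square root, lands precisely on the stated inequality with $\|x_j\|$ rather than $B$ --- a cleaner match to the lemma. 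What your approach buys is exactness of form; what it costs is that the burden shifts entirely onto the step you flagged yourself: converting $\mathbb{V}\mathrm{ar}(d(x_i,\mathcal{N}(x_j))) \leq \delta$ into $(\|x_i\|-\|x_j\|)^2 \leq \delta$. Be aware that the paper does not resolve this either --- its proof has the same unjustified variance-to-additive-term step, just hidden inside the squared bound --- so your proposal is not weaker than the published argument on this point; it is simply more honest about where the gap sits. If you want to close it, you will need an explicit normalization convention (the paper's Section 4 does state the crop embeddings are normalized, which would collapse the radial term entirely and make the variance hypothesis do different work), and you should note that under unit normalization the lemma's $\|x_j\|$ factor becomes $1$ and both proofs simplify considerably.
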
  

\begin{proof}  
From the cosine similarity bound, there have 
$x_i^\top x_j \geq \epsilon \|x_i\|\|x_j\|$.
Using the variance constraint:  
\begin{equation*}  
\mathbb{E}[(x_i^\top x_k - \mu)^2] \leq \delta, \quad \forall x_k \in \mathcal{N}(x_j)  
\end{equation*}  
where $\mu = \mathbb{E}[x_i^\top x_k]$. Combining via the triangle inequality:  
\begin{equation*}  
\begin{aligned}  
\|x_i - x_j\|^2 &= \|x_i\|^2 + \|x_j\|^2 - 2x_i^\top x_j \\  
&\leq 2B^2 - 2\epsilon B^2 + \sqrt{\delta} \\  
&= 2(1-\epsilon)B^2 + \sqrt{\delta}  
\end{aligned}  
\end{equation*}  
\end{proof}  
The lemma shows that pruned tokens can be approximated by retained tokens in Euclidean space.
\begin{theorem}[Semantic Preservation]  
\label{thm:semantic}  
Let $f$ be a transformer layer with Lipschitz constant $L$. For input embeddings $\mathcal{X}_v$ and pruned set $\mathcal{R}_v$ satisfying \textbf{(C1)-(C2)}:  
\begin{equation*}  
\|f(\mathcal{X}_v) - f(\mathcal{R}_v)\| \leq L\left[\sqrt{2(1-\epsilon)}B + \sqrt{\delta}\right] + \eta(B, \gamma)  
\end{equation*}  
where $\eta(B, \gamma) = \mathcal{O}\left(B^2/\gamma\right)$ is the residual error from the scoring threshold.  
\end{theorem}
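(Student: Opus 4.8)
The plan is to bound the output discrepancy by a triangle-inequality split into a \emph{geometric approximation} term, controlled by Lemma~\ref{lemma:coverage} together with the Lipschitz property of $f$, and a \emph{residual redistribution} term, controlled by the scoring threshold in \textbf{(C2)}. Concretely, I would introduce an auxiliary configuration $\pi(\mathcal{X}_v)$ obtained by replacing every pruned token $x_j \in \mathcal{X}_v \setminus \mathcal{R}_v$ with its nearest retained surrogate $x_i \in \mathcal{R}_v$ guaranteed by \textbf{(C1)}, and write
\begin{equation*}
\|f(\mathcal{X}_v) - f(\mathcal{R}_v)\| \leq \|f(\mathcal{X}_v) - f(\pi(\mathcal{X}_v))\| + \|f(\pi(\mathcal{X}_v)) - f(\mathcal{R}_v)\|.
\end{equation*}
The first term isolates the cost of relocating each token to its surrogate; the second isolates the cost of collapsing the surrogate copies down onto the retained set.

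For the first term, I would use the Lipschitz constant $L$ of the transformer layer to transfer the bound from input to output, giving $\|f(\mathcal{X}_v) - f(\pi(\mathcal{X}_v))\| \leq L \max_{j}\|x_j - \pi(x_j)\|$. Lemma~\ref{lemma:coverage} then bounds each displacement by $\sqrt{2(1-\epsilon)}\|x_j\| + \sqrt{\delta}$, and the uniform norm bound $\|x_j\| \leq B$ yields the leading term $L[\sqrt{2(1-\epsilon)}B + \sqrt{\delta}]$. This step is essentially mechanical once the nearest-neighbor map $\pi$ is fixed.

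The second term is the crux. Here $\pi(\mathcal{X}_v)$ and $\mathcal{R}_v$ agree as point sets but differ in \emph{multiplicity}: the surrogates carry the attention mass of the tokens they absorbed. I would expand the self-attention output as a softmax-weighted combination of value vectors and show that collapsing duplicated surrogates perturbs only the softmax normalization. The threshold condition $\mathcal{H}(x_i) \geq \gamma$ in \textbf{(C2)} lower-bounds the score of every retained token, which in turn upper-bounds the fraction of attention mass any pruned token could have diverted; since the unnormalized logits are inner products bounded by $\|x_i\|\|x_j\| \leq B^2$, the normalization mismatch scales like $B^2/\gamma$, giving $\eta(B,\gamma) = \mathcal{O}(B^2/\gamma)$. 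Summing the two contributions by the triangle inequality then produces the stated bound.

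I expect the residual-redistribution step to be the main obstacle, for two reasons. First, $f(\mathcal{X}_v)$ and $f(\mathcal{R}_v)$ act on sequences of different lengths, so one must fix a consistent readout (for instance the output at a shared query position, or a pooled representation) before the two norms are even comparable. Second, controlling the softmax under token deletion requires a careful Lipschitz-type estimate for the normalization denominator, and it is precisely the interplay between this denominator and the threshold $\gamma$ that yields the $B^2/\gamma$ scaling; making the hidden constant explicit and verifying that the error does not accumulate across attention heads is the delicate part of the argument.
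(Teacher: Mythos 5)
Your proposal follows essentially the same skeleton as the paper's proof: a triangle-inequality decomposition into a geometric term, bounded by combining Lemma~\ref{lemma:coverage} with the Lipschitz constant $L$ (your version folds the paper's separately listed ``context variance'' component into this term, which is harmless since the $\sqrt{\delta}$ already comes out of the lemma), plus a residual term of order $B^2/\gamma$ attributable to the scoring threshold in \textbf{(C2)}. The one genuine divergence is how that residual is obtained. The paper never opens up the softmax: it observes that every pruned token has score below $\gamma$ and then applies Cauchy--Schwarz to the value projections of the pruned tokens, giving $\eta \leq \frac{1}{\gamma}\sum_{x \notin \mathcal{R}_v} \|W_V x\|^2 \leq CB^2/\gamma$ directly. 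Your route---tracking how collapsing the surrogate copies perturbs the softmax normalization and using $\gamma$ to cap the diverted attention mass---is more mechanistically explicit about where the error arises, but it is precisely the step you predict will be delicate, and the paper's Cauchy--Schwarz shortcut sidesteps the denominator analysis entirely (at the cost of burying the constant $C$ and the dependence on attention weights). Two further observations in your favor: you make the surrogate map $\pi$ explicit where the paper leaves the matching implicit, and you correctly note that $f(\mathcal{X}_v)$ and $f(\mathcal{R}_v)$ act on sequences of different lengths, so $\|f(\mathcal{X}_v)-f(\mathcal{R}_v)\|$ is not well defined without fixing a readout convention; the paper's own proof silently ignores this, so your caution flags a real gap in the published argument rather than a defect in your plan.
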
  

\begin{proof}  
Decompose the error into three components:  
1) \textbf{Geometric distortion}: Bounded by Lemma~\ref{lemma:coverage}  
2) \textbf{Context variance}: Controlled by $\sqrt{\delta}$  
3) \textbf{Scoring residual}:  

For any $x \in \mathcal{X}_v \setminus \mathcal{R}_v$ with $\mathcal{S}(x) < \gamma$:  
\begin{equation*}  
\mathcal{V}^c + \mathcal{A}^c < \gamma \Rightarrow \mathcal{V}(x) < \gamma - \mathcal{A}(x)
\end{equation*}  
Using Cauchy-Schwarz inequality:  
\begin{equation*}  
\eta \leq \frac{1}{\gamma}\sum_{x \notin \mathcal{R}_v} \|W_V x\|^2 \leq \frac{CB^2}{\gamma}  
\end{equation*}  
Combining terms via the triangle inequality completes the proof.  
\end{proof}  This theorem guarantees that, even after pruning, the semantic difference between the outputs of the transformer for the original.

\begin{corollary}[Dynamic Allocation Optimality]  
\label{cor:alloc}  
The token allocation in Section~\ref{sec:method} achieves:  
\begin{equation*}  
\max_{\{k_p\}} \sum_{p=1}^P \log\left(\sum_{t=1}^{k_p} \mathcal{S}_{pt}\right) \quad \text{s.t.} \quad \sum_p k_p = N_{\text{target}}  
\end{equation*}  
with approximation ratio $1 - 1/e$ when using greedy selection.  
\end{corollary}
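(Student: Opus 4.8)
The plan is to recognize the allocation objective as the maximization of a \emph{monotone submodular} set function under a cardinality constraint, and then invoke the classical greedy guarantee of Nemhauser, Wolsey, and Fisher. First I would recast the problem over the ground set $V=\bigcup_{p=1}^P C_p$ of all visual tokens, where $C_p$ is the token set of crop $p$. For a selected subset $\Omega\subseteq V$ let $k_p(\Omega)=|\Omega\cap C_p|$, and define $F(\Omega)=\sum_{p=1}^P g_p(k_p(\Omega))$ with $g_p(k)=\log\bigl(\sum_{t=1}^{k}\mathcal{S}_{pt}\bigr)$, the scores $\mathcal{S}_{p1}\ge\mathcal{S}_{p2}\ge\cdots$ being sorted in descending order within each crop. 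Because $g_p$ is increasing in $k$, the greedy procedure automatically fills each crop from its highest-scoring tokens, so any feasible count vector $\{k_p\}$ with $\sum_p k_p=N_{\text{target}}$ corresponds to selecting the top-$k_p$ tokens per crop; this makes the set-function view equivalent to the allocation stated in Section~\ref{sec:method}.

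The second step is to verify the two structural properties that $F$ must satisfy. \textbf{Monotonicity} follows because the scores are nonnegative, so each partial sum $S_p(k)=\sum_{t=1}^k\mathcal{S}_{pt}$ is nondecreasing and $\log$ is increasing; hence $F(\Omega\cup\{x\})\ge F(\Omega)$ for every $x$. \textbf{Submodularity} follows from discrete concavity of each $g_p$: since the sorted scores satisfy $\mathcal{S}_{p,k}\ge\mathcal{S}_{p,k+1}$, the partial sums $S_p$ are concave, and composing the increasing concave map $\log$ with a concave increasing function preserves concavity, so the marginal gains $\Delta_p(k)=g_p(k)-g_p(k-1)$ are nonincreasing in $k$. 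Because $F$ is separable across crops, adding a token $x\in C_p$ changes only the $p$-th summand, so for $\Omega\subseteq\Omega'$ and $x\in C_p\setminus\Omega'$ we get $F(\Omega\cup\{x\})-F(\Omega)=\Delta_p(k_p(\Omega)+1)\ge\Delta_p(k_p(\Omega')+1)=F(\Omega'\cup\{x\})-F(\Omega')$, which is exactly the diminishing-returns condition.

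With monotone submodularity in hand, the final step is to apply the greedy approximation theorem: selecting tokens one at a time, each time adding the element of largest marginal gain (equivalently, handing the next unit of budget to the crop with the currently largest $\Delta_p$), yields a solution whose value is at least $(1-1/e)$ times the constrained optimum. I would note that HoloV's per-step rule---awarding budget to the crop with the highest incremental contextual-saliency score---is precisely this marginal-gain greedy, so the guarantee transfers directly.

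The main obstacle I anticipate is the normalization required by the greedy theorem, which is usually stated for a nonnegative function with $F(\emptyset)=0$; here $g_p(0)=\log 0=-\infty$ and $g_p(k)$ can be negative when a partial sum is below one. I would resolve this by anchoring the analysis at a base allocation in which every crop already retains at least one token---guaranteed by the fall-short reallocation step of Section~\ref{sec:method}---and running the $1-1/e$ argument on the shifted objective $\tilde F(\Omega)=F(\Omega)-F(\Omega_0)$ over the remaining budget, where $\Omega_0$ is that base allocation. Verifying that this shift preserves monotone submodularity and that the optimum of the shifted problem differs from the original only by the fixed constant $F(\Omega_0)$ is the delicate bookkeeping that makes the clean $1-1/e$ ratio legitimate.
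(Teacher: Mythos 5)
Your proposal follows essentially the same route as the paper's proof, which simply asserts that the allocation problem is equivalent to maximizing a monotone submodular function and invokes the greedy $(1-1/e)$ guarantee; you supply exactly the verification the paper leaves implicit (monotonicity from nonnegative scores, submodularity from discrete concavity of $\log$ composed with the sorted partial sums, and the separable diminishing-returns argument across crops). Your closing observation about normalization --- that $g_p(0)=-\infty$ and possibly negative log-values make a multiplicative ratio delicate, requiring anchoring at a one-token-per-crop base allocation --- flags a real subtlety that the paper's two-sentence proof never addresses, so your treatment is strictly more careful than the original.
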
  

\begin{proof}  
The allocation problem is equivalent to maximizing a monotone submodular function. Greedy algorithms provide $(1 - 1/e)$-approximation guarantees \cite{tropp2004greed} for such problems.  
\end{proof}  
This corollary shows that your token allocation strategy is not only efficient but also theoretically near-optimal.

This theoretical framework demonstrates that HoloV:  
1) Preserves semantic relationships through bounded geometric distortion.
2) Context variance is controlled via stability-aware pruning. 
3) Token allocation is provably near-optimal, balancing efficiency and effectiveness.

\section{Fast Visual Context Refetching}
\label{apx:fastvcr}
\subsection{Preliminary: Reformulation of FFN}
Vanilla FFN comprises two fully connected layers with non-linear activation in between. We suppose $\boldsymbol{x}\in\mathbb{R}^d$ as an input token of the FFN, and FFN function can be formulated as 
\begin{equation}
    \operatorname{FFN}(\boldsymbol{x})=\phi\left(\boldsymbol{x} \boldsymbol{W}_1\right) \boldsymbol{W}_2^{\top},
\end{equation}
where $\phi$ is activation function like ReLU or SiLU \cite{liu2020evolving}, and $\boldsymbol{W}_1, \boldsymbol{W}_2\in \mathbb{R}^{d\times D}$ are the weight matrices, in usual $D=4d$. Peculiarly, $\boldsymbol{W}_1$ and $\boldsymbol{W}_2$ can be rewritten as
\begin{equation}
    \boldsymbol{W}_1=(\boldsymbol{k}_1, \boldsymbol{k}_2, \ldots, \boldsymbol{k}_D), \boldsymbol{W}_2=(\boldsymbol{v}_1, \boldsymbol{v}_2, \ldots, \boldsymbol{v}_D),
\end{equation}
where $\boldsymbol{k}_i, \boldsymbol{v}_i \in \mathbb{R}^{d}$ denote entries of key and value, respectively. As a result, the FFN can be reformulated as 
\begin{equation}
    \operatorname{FFN}(\boldsymbol{x})=\sum\phi\left(\left\langle\boldsymbol{x}, \boldsymbol{k}_{i}\right\rangle\right) \cdot\boldsymbol{v}_i\;.
    \label{eq4}
\end{equation}
Thus, the FFN function can be construed as using input $\boldsymbol{x}$ as a query to measure similarity with keys, find matching values, and gather values by similarity, which works like a key-value memory storing the factual knowledge as found in previous studies \cite{geva2021transformer,jie2024memory}.

\subsection{FFN with Visual Context Refetching}
\label{sec:retracing}
We propose visual context refetching (VCR), \textit{i.e.}, reinjecting pruned visual information into the middle layer of the text decoder during elevated uncertainty during reasoning. This strategy treats pruned visual tokens as anchors to recalibrate off-target predictions and reduces uncertainties in \textit{object, attribute, relationship} tokens.
The reason we call this pattern of reinjecting visual evidence VCR is that the model finds and refreshes key visual memories based on the hidden states in this process. In particular, inspired by the fact that FFN executes analogous retrieval from its key-value memory, we consider VCR to serve as a simplified and efficient information re-retrieval process. Given a hidden token $\boldsymbol{x}\in\mathbb{R}^d$ and dimension-aligned vision tokens $\boldsymbol{z}_v$, FFN with visual context refetching at $l$-th layer can be written as follows
\begin{equation}
   {\operatorname{FFN}}^{(l)}(\boldsymbol{x} \propto \boldsymbol{z}_{v})=\alpha\underline{\Delta}+(1-\alpha)\operatorname{FFN}^{(l)}(\boldsymbol{x}) ,
   \label{eq5}
\end{equation}
where $\boldsymbol{z}_v=(\boldsymbol{z}_{v,1}, \dots, \boldsymbol{z}_{v, N_v})\in\mathbb{R}^{d\times N_v}$, ${x} \propto \boldsymbol{z}_{v}$ denotes execute VCR $\underline{\Delta}$ from $\boldsymbol{x}$ to visual features $\boldsymbol{z}_{v}$, and $\alpha\in [0, 1]$ denotes injection ratio of visual memory through the FFN layer which proportional to image complexity. Specifically, instead of performing retrieval via cross-attention layers as in previous approaches \cite{li2022blip,alayrac2022flamingo,zou2023dpnet}, we consider a simple retrieval process for VCR as,
\begin{equation}
    \underline{\Delta}({\boldsymbol{z}}_{v} \mid \boldsymbol{x})=\sum\nolimits_{i=1}^{N_v} \phi(\left\langle\boldsymbol{x}, {\boldsymbol{z}}_{v,i}\right\rangle) \cdot {\boldsymbol{z}}_{v,i}.
    \label{eq6}
\end{equation}
From the perspective of FFN, VCR works by treating $\boldsymbol{x}$ as a query, and $\left\langle{\boldsymbol{z}}_{v,i} : {\boldsymbol{z}}_{v,i}\right\rangle$ as new key-value entries (visual evidence) to supplement vision-related information in the hidden states. In this information re-retrieval process, MemVCR does not introduce any parameters that need to be trained. Notably, since the size of key-value memory $D$ in FFN typically far exceeds the number of visual tokens $N_v$ (for instance, $D=11008$ in LLaMA-7B and $N_v=256$ for ViT-L/14, $N_v\ll D$), the computation of MemVCR is negligible. Thus, VCR operation is more efficient than the cross-attention mechanism with quadratic complexity.

\subsection{Further Efficiency Analysis}

\begin{wrapfigure}{r}{0.3\textwidth}
\setlength\intextsep{0pt}
\centering
\vspace{-1.5em}
\captionsetup{type=figure}
\includegraphics[width=0.3\textwidth]{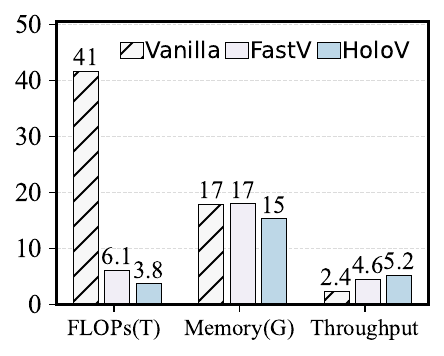}
\vspace{-1.5 em}
\caption{Inference efficiency comparison between FastV and HoloV.}
\label{fig:scaling}
\end{wrapfigure}
As shown in Fig. \ref{fig:scaling}, we conduct efficiency evaluation on LLaVA-NeXT 7B at 95\% pruning ratio, where we also introduce baseline (unpruned Vanilla) and FastV (95\% pruned) for comparison. We evaluate these approaches using QA pairs from GQA, and the output length has been set to 1. During evaluation, an A800 80GB GPU has been used, and the average FLOPs, VMemory usage and throughput has been calculated, shown in Fig. \ref{fig:scaling}. HoloV reduces over 90\% of FLOPs requirement, 37\% lower than FastV, and its VMemory usage is at the lowest level, while keeping throughput at 5.2 per second, 2.16x and 1.13x faster than baseline and FastV, respectively.

\section{Impact Statement}
\label{app:impact_statement}

This paper presents HoloV, a visual token pruning framework for MLLMs, and discusses its potential societal impacts. On the positive side, HoloV enhances the accessibility of multimodal technologies by reducing computational overhead, making advanced applications like medical image analysis and autonomous driving more feasible in resource-constrained environments such as edge devices or underserved regions. Its efficiency also contributes to energy sustainability by lowering the energy consumption of MLLM inference, aligning with global efforts to mitigate the environmental impact of AI. Additionally, by preserving holistic visual context instead of relying solely on attention-based "highlighted tokens," HoloV may reduce biases in model outputs, improving fairness in diverse scenarios like visual reasoning involving underrepresented communities. The framework’s plug-and-play design further accelerates its integration into real-world systems, driving innovations in education, accessibility tools, and emergency response to enhance societal resilience.

However, the work also carries potential negative implications. The reduced computational barriers enabled by HoloV could facilitate misuse, such as the creation of deepfakes or misinformation, particularly in regions with limited regulatory oversight. While aiming to mitigate attention-based biases, the framework’s crop-wise token allocation might inadvertently reinforce other biases if training data lacks diversity, potentially disadvantaging underrepresented groups. Moreover, the focus on inference efficiency might lead developers to prioritize speed over model interpretability, raising concerns about accountability in "black-box" deployments for high-stakes tasks like healthcare diagnostics. Lastly, over-reliance on post-hoc pruning could deter investments in more equitable training data or architectural improvements, potentially accumulating technical debt and masking foundational issues in MLLM development.

\textbf{Limitations and Future Work}. \label{label:limiations}
HoloV demonstrates robust performance in preserving holistic visual context but faces two key limitations: its dependence on fixed spatial crop partitioning may hinder fine-grained semantic capture in complex scenes, and minor accuracy declines persist even at high pruning ratios (e.g., 4.2\% drop when pruning 88.9\% visual tokens). To address these, future work could prioritize adaptive crop, sparse attention, multi-modality extensions (\textit{e.g.}, 3D data), and integration with hallucination mitigation, while optimizing for edge computing energy efficiency.


\newpage
\section*{NeurIPS Paper Checklist}

\begin{enumerate}

\item {\bf Claims}
    \item[] Question: Do the main claims made in the abstract and introduction accurately reflect the paper's contributions and scope?
    \item[] Answer: \answerYes{} 
    \item[] Justification: The claims are clearly stated in the abstract and the introduction.
    \item[] Guidelines:
    \begin{itemize}
        \item The answer NA means that the abstract and introduction do not include the claims made in the paper.
        \item The abstract and/or introduction should clearly state the claims made, including the contributions made in the paper and important assumptions and limitations. A No or NA answer to this question will not be perceived well by the reviewers. 
        \item The claims made should match theoretical and experimental results, and reflect how much the results can be expected to generalize to other settings. 
        \item It is fine to include aspirational goals as motivation as long as it is clear that these goals are not attained by the paper. 
    \end{itemize}

\item {\bf Limitations}
    \item[] Question: Does the paper discuss the limitations of the work performed by the authors?
    \item[] Answer: \answerYes{} 
    \item[] Justification: The discussion on the limitations of our work is stated in the paragraph \ref{label:limiations}.
    \item[] Guidelines:
    \begin{itemize}
        \item The answer NA means that the paper has no limitation while the answer No means that the paper has limitations, but those are not discussed in the paper. 
        \item The authors are encouraged to create a separate "Limitations" section in their paper.
        \item The paper should point out any strong assumptions and how robust the results are to violations of these assumptions (e.g., independence assumptions, noiseless settings, model well-specification, asymptotic approximations only holding locally). The authors should reflect on how these assumptions might be violated in practice and what the implications would be.
        \item The authors should reflect on the scope of the claims made, e.g., if the approach was only tested on a few datasets or with a few runs. In general, empirical results often depend on implicit assumptions, which should be articulated.
        \item The authors should reflect on the factors that influence the performance of the approach. For example, a facial recognition algorithm may perform poorly when image resolution is low or images are taken in low lighting. Or a speech-to-text system might not be used reliably to provide closed captions for online lectures because it fails to handle technical jargon.
        \item The authors should discuss the computational efficiency of the proposed algorithms and how they scale with dataset size.
        \item If applicable, the authors should discuss possible limitations of their approach to address problems of privacy and fairness.
        \item While the authors might fear that complete honesty about limitations might be used by reviewers as grounds for rejection, a worse outcome might be that reviewers discover limitations that aren't acknowledged in the paper. The authors should use their best judgment and recognize that individual actions in favor of transparency play an important role in developing norms that preserve the integrity of the community. Reviewers will be specifically instructed to not penalize honesty concerning limitations.
    \end{itemize}

\item {\bf Theory assumptions and proofs}
    \item[] Question: For each theoretical result, does the paper provide the full set of assumptions and a complete (and correct) proof?
    \item[] Answer: \answerNA{} 
    \item[] Justification: Our work is motivated by an interesting experimental phenomenon and proposes methods based on this observation, which improves the baseline by a large margin. There are no assumptions and no following proofs.
    \item[] Guidelines:
    \begin{itemize}
        \item The answer NA means that the paper does not include theoretical results. 
        \item All the theorems, formulas, and proofs in the paper should be numbered and cross-referenced.
        \item All assumptions should be clearly stated or referenced in the statement of any theorems.
        \item The proofs can either appear in the main paper or the supplemental material, but if they appear in the supplemental material, the authors are encouraged to provide a short proof sketch to provide intuition. 
        \item Inversely, any informal proof provided in the core of the paper should be complemented by formal proofs provided in appendix or supplemental material.
        \item Theorems and Lemmas that the proof relies upon should be properly referenced. 
    \end{itemize}

    \item {\bf Experimental result reproducibility}
    \item[] Question: Does the paper fully disclose all the information needed to reproduce the main experimental results of the paper to the extent that it affects the main claims and/or conclusions of the paper (regardless of whether the code and data are provided or not)?
    \item[] Answer: \answerYes{} 
    \item[] Justification: The paper includes the implementation details in the experiment section and the appendix.
    \item[] Guidelines:
    \begin{itemize}
        \item The answer NA means that the paper does not include experiments.
        \item If the paper includes experiments, a No answer to this question will not be perceived well by the reviewers: Making the paper reproducible is important, regardless of whether the code and data are provided or not.
        \item If the contribution is a dataset and/or model, the authors should describe the steps taken to make their results reproducible or verifiable. 
        \item Depending on the contribution, reproducibility can be accomplished in various ways. For example, if the contribution is a novel architecture, describing the architecture fully might suffice, or if the contribution is a specific model and empirical evaluation, it may be necessary to either make it possible for others to replicate the model with the same dataset, or provide access to the model. In general. releasing code and data is often one good way to accomplish this, but reproducibility can also be provided via detailed instructions for how to replicate the results, access to a hosted model (e.g., in the case of a large language model), releasing of a model checkpoint, or other means that are appropriate to the research performed.
        \item While NeurIPS does not require releasing code, the conference does require all submissions to provide some reasonable avenue for reproducibility, which may depend on the nature of the contribution. For example
        \begin{enumerate}
            \item If the contribution is primarily a new algorithm, the paper should make it clear how to reproduce that algorithm.
            \item If the contribution is primarily a new model architecture, the paper should describe the architecture clearly and fully.
            \item If the contribution is a new model (e.g., a large language model), then there should either be a way to access this model for reproducing the results or a way to reproduce the model (e.g., with an open-source dataset or instructions for how to construct the dataset).
            \item We recognize that reproducibility may be tricky in some cases, in which case authors are welcome to describe the particular way they provide for reproducibility. In the case of closed-source models, it may be that access to the model is limited in some way (e.g., to registered users), but it should be possible for other researchers to have some path to reproducing or verifying the results.
        \end{enumerate}
    \end{itemize}

\item {\bf Open access to data and code}
    \item[] Question: Does the paper provide open access to the data and code, with sufficient instructions to faithfully reproduce the main experimental results, as described in supplemental material?
    \item[] Answer: \answerYes{} 
    \item[] Justification: We provide the dataset URL and code URL as full submission.
    \item[] Guidelines:
    \begin{itemize}
        \item The answer NA means that paper does not include experiments requiring code.
        \item Please see the NeurIPS code and data submission guidelines (\url{https://nips.cc/public/guides/CodeSubmissionPolicy}) for more details.
        \item While we encourage the release of code and data, we understand that this might not be possible, so “No” is an acceptable answer. Papers cannot be rejected simply for not including code, unless this is central to the contribution (e.g., for a new open-source benchmark).
        \item The instructions should contain the exact command and environment needed to run to reproduce the results. See the NeurIPS code and data submission guidelines (\url{https://nips.cc/public/guides/CodeSubmissionPolicy}) for more details.
        \item The authors should provide instructions on data access and preparation, including how to access the raw data, preprocessed data, intermediate data, and generated data, etc.
        \item The authors should provide scripts to reproduce all experimental results for the new proposed method and baselines. If only a subset of experiments are reproducible, they should state which ones are omitted from the script and why.
        \item At submission time, to preserve anonymity, the authors should release anonymized versions (if applicable).
        \item Providing as much information as possible in supplemental material (appended to the paper) is recommended, but including URLs to data and code is permitted.
    \end{itemize}

\item {\bf Experimental setting/details}
    \item[] Question: Does the paper specify all the training and test details (e.g., data splits, hyperparameters, how they were chosen, type of optimizer, etc.) necessary to understand the results?
    \item[] Answer: \answerYes{} 
    \item[] Justification: We specific experiment settings in Section \ref{sec:exp} and Appendix \ref{apx:setting}.
    \item[] Guidelines:
    \begin{itemize}
        \item The answer NA means that the paper does not include experiments.
        \item The experimental setting should be presented in the core of the paper to a level of detail that is necessary to appreciate the results and make sense of them.
        \item The full details can be provided either with the code, in appendix, or as supplemental material.
    \end{itemize}

\item {\bf Experiment statistical significance}
    \item[] Question: Does the paper report error bars suitably and correctly defined or other appropriate information about the statistical significance of the experiments?
    \item[] Answer: \answerNo{} 
    \item[] Justification: We don't need to conduct such an evaluation.
    \item[] Guidelines:
    \begin{itemize}
        \item The answer NA means that the paper does not include experiments.
        \item The authors should answer "Yes" if the results are accompanied by error bars, confidence intervals, or statistical significance tests, at least for the experiments that support the main claims of the paper.
        \item The factors of variability that the error bars are capturing should be clearly stated (for example, train/test split, initialization, random drawing of some parameter, or overall run with given experimental conditions).
        \item The method for calculating the error bars should be explained (closed form formula, call to a library function, bootstrap, etc.)
        \item The assumptions made should be given (e.g., Normally distributed errors).
        \item It should be clear whether the error bar is the standard deviation or the standard error of the mean.
        \item It is OK to report 1-sigma error bars, but one should state it. The authors should preferably report a 2-sigma error bar than state that they have a 96\% CI, if the hypothesis of Normality of errors is not verified.
        \item For asymmetric distributions, the authors should be careful not to show in tables or figures symmetric error bars that would yield results that are out of range (e.g. negative error rates).
        \item If error bars are reported in tables or plots, The authors should explain in the text how they were calculated and reference the corresponding figures or tables in the text.
    \end{itemize}

\item {\bf Experiments compute resources}
    \item[] Question: For each experiment, does the paper provide sufficient information on the computer resources (type of compute workers, memory, time of execution) needed to reproduce the experiments?
    \item[] Answer: \answerYes{} 
    \item[] Justification: We specific experiment settings in Section \ref{sec:eff}.
    \item[] Guidelines:
    \begin{itemize}
        \item The answer NA means that the paper does not include experiments.
        \item The paper should indicate the type of compute workers CPU or GPU, internal cluster, or cloud provider, including relevant memory and storage.
        \item The paper should provide the amount of compute required for each of the individual experimental runs as well as estimate the total compute. 
        \item The paper should disclose whether the full research project required more compute than the experiments reported in the paper (e.g., preliminary or failed experiments that didn't make it into the paper). 
    \end{itemize}
    
\item {\bf Code of ethics}
    \item[] Question: Does the research conducted in the paper conform, in every respect, with the NeurIPS Code of Ethics \url{https://neurips.cc/public/EthicsGuidelines}?
    \item[] Answer: \answerYes{} 
    \item[] Justification:  We conducted the research in the paper conform, in every respect, with the NeurIPS Code of Ethics.
    \item[] Guidelines:
    \begin{itemize}
        \item The answer NA means that the authors have not reviewed the NeurIPS Code of Ethics.
        \item If the authors answer No, they should explain the special circumstances that require a deviation from the Code of Ethics.
        \item The authors should make sure to preserve anonymity (e.g., if there is a special consideration due to laws or regulations in their jurisdiction).
    \end{itemize}

\item {\bf Broader impacts}
    \item[] Question: Does the paper discuss both potential positive societal impacts and negative societal impacts of the work performed?
    \item[] Answer: \answerYes{} 
    \item[] Justification: The discussion on both potential positive societal impacts and negative societal impacts is stated in Appendix \ref{app:impact_statement}.
    \item[] Guidelines:
    \begin{itemize}
        \item The answer NA means that there is no societal impact of the work performed.
        \item If the authors answer NA or No, they should explain why their work has no societal impact or why the paper does not address societal impact.
        \item Examples of negative societal impacts include potential malicious or unintended uses (e.g., disinformation, generating fake profiles, surveillance), fairness considerations (e.g., deployment of technologies that could make decisions that unfairly impact specific groups), privacy considerations, and security considerations.
        \item The conference expects that many papers will be foundational research and not tied to particular applications, let alone deployments. However, if there is a direct path to any negative applications, the authors should point it out. For example, it is legitimate to point out that an improvement in the quality of generative models could be used to generate deepfakes for disinformation. On the other hand, it is not needed to point out that a generic algorithm for optimizing neural networks could enable people to train models that generate Deepfakes faster.
        \item The authors should consider possible harms that could arise when the technology is being used as intended and functioning correctly, harms that could arise when the technology is being used as intended but gives incorrect results, and harms following from (intentional or unintentional) misuse of the technology.
        \item If there are negative societal impacts, the authors could also discuss possible mitigation strategies (e.g., gated release of models, providing defenses in addition to attacks, mechanisms for monitoring misuse, mechanisms to monitor how a system learns from feedback over time, improving the efficiency and accessibility of ML).
    \end{itemize}
    
\item {\bf Safeguards}
    \item[] Question: Does the paper describe safeguards that have been put in place for responsible release of data or models that have a high risk for misuse (e.g., pretrained language models, image generators, or scraped datasets)?
    \item[] Answer: \answerNA{} 
    \item[] Justification: The paper poses no such risks.
    \item[] Guidelines:
    \begin{itemize}
        \item The answer NA means that the paper poses no such risks.
        \item Released models that have a high risk for misuse or dual-use should be released with necessary safeguards to allow for controlled use of the model, for example by requiring that users adhere to usage guidelines or restrictions to access the model or implementing safety filters. 
        \item Datasets that have been scraped from the Internet could pose safety risks. The authors should describe how they avoided releasing unsafe images.
        \item We recognize that providing effective safeguards is challenging, and many papers do not require this, but we encourage authors to take this into account and make a best faith effort.
    \end{itemize}

\item {\bf Licenses for existing assets}
    \item[] Question: Are the creators or original owners of assets (e.g., code, data, models), used in the paper, properly credited and are the license and terms of use explicitly mentioned and properly respected?
    \item[] Answer: \answerNA{} 
    \item[] Justification: The paper does not use existing assets.
    \item[] Guidelines:
    \begin{itemize}
        \item The answer NA means that the paper does not use existing assets.
        \item The authors should cite the original paper that produced the code package or dataset.
        \item The authors should state which version of the asset is used and, if possible, include a URL.
        \item The name of the license (e.g., CC-BY 4.0) should be included for each asset.
        \item For scraped data from a particular source (e.g., website), the copyright and terms of service of that source should be provided.
        \item If assets are released, the license, copyright information, and terms of use in the package should be provided. For popular datasets, \url{paperswithcode.com/datasets} has curated licenses for some datasets. Their licensing guide can help determine the license of a dataset.
        \item For existing datasets that are re-packaged, both the original license and the license of the derived asset (if it has changed) should be provided.
        \item If this information is not available online, the authors are encouraged to reach out to the asset's creators.
    \end{itemize}

\item {\bf New assets}
    \item[] Question: Are new assets introduced in the paper well documented and is the documentation provided alongside the assets?
    \item[] Answer: \answerNA{} 
    \item[] Justification: The paper does not release new assets.
    \item[] Guidelines:
    \begin{itemize}
        \item The answer NA means that the paper does not release new assets.
        \item Researchers should communicate the details of the dataset/code/model as part of their submissions via structured templates. This includes details about training, license, limitations, etc. 
        \item The paper should discuss whether and how consent was obtained from people whose asset is used.
        \item At submission time, remember to anonymize your assets (if applicable). You can either create an anonymized URL or include an anonymized zip file.
    \end{itemize}

\item {\bf Crowdsourcing and research with human subjects}
    \item[] Question: For crowdsourcing experiments and research with human subjects, does the paper include the full text of instructions given to participants and screenshots, if applicable, as well as details about compensation (if any)? 
    \item[] Answer: \answerNA{} 
    \item[] Justification: The paper does not involve crowdsourcing experiments or research with human subjects, so no related details are included.
    \item[] Guidelines:
    \begin{itemize}
        \item The answer NA means that the paper does not involve crowdsourcing nor research with human subjects.
        \item Including this information in the supplemental material is fine, but if the main contribution of the paper involves human subjects, then as much detail as possible should be included in the main paper. 
        \item According to the NeurIPS Code of Ethics, workers involved in data collection, curation, or other labor should be paid at least the minimum wage in the country of the data collector. 
    \end{itemize}

\item {\bf Institutional review board (IRB) approvals or equivalent for research with human subjects}
    \item[] Question: Does the paper describe potential risks incurred by study participants, whether such risks were disclosed to the subjects, and whether Institutional Review Board (IRB) approvals (or an equivalent approval/review based on the requirements of your country or institution) were obtained?
    \item[] Answer: \answerNA{} 
    \item[] Justification: The research described in the paper does not involve study participants or human subjects, thus questions regarding potential risks, disclosure, or IRB approvals are not applicable.
    \item[] Guidelines:
    \begin{itemize}
        \item The answer NA means that the paper does not involve crowdsourcing nor research with human subjects.
        \item Depending on the country in which research is conducted, IRB approval (or equivalent) may be required for any human subjects research. If you obtained IRB approval, you should clearly state this in the paper. 
        \item We recognize that the procedures for this may vary significantly between institutions and locations, and we expect authors to adhere to the NeurIPS Code of Ethics and the guidelines for their institution. 
        \item For initial submissions, do not include any information that would break anonymity (if applicable), such as the institution conducting the review.
    \end{itemize}

\item {\bf Declaration of LLM usage}
    \item[] Question: Does the paper describe the usage of LLMs if it is an important, original, or non-standard component of the core methods in this research? Note that if the LLM is used only for writing, editing, or formatting purposes and does not impact the core methodology, scientific rigorousness, or originality of the research, declaration is not required.
    \item[] Answer: \answerNA{} 
    \item[] Justification: The paper does not mention the usage of LLMs as a significant or original component of the core methods.
    \item[] Guidelines:
    \begin{itemize}
        \item The answer NA means that the core method development in this research does not involve LLMs as any important, original, or non-standard components.
        \item Please refer to our LLM policy (\url{https://neurips.cc/Conferences/2025/LLM}) for what should or should not be described.
    \end{itemize}

\end{enumerate}

\end{document}